\pdfoutput=1
\documentclass[pdflatex,sn-apa]{sn-jnl}

\usepackage{amsmath,amssymb}
\usepackage{graphicx}
\usepackage[english]{babel}
\usepackage{etoolbox}
\apptocmd{\thebibliography}{\let\apaciteNewblock\newblock\def\newblock{\ifhmode\apaciteNewblock\fi}\setlength{\itemsep}{3pt}\small}{}{}

\theoremstyle{thmstyleone}
\newtheorem{theorem}{Theorem}
\newtheorem{proposition}[theorem]{Proposition}
\newtheorem{corollary}[theorem]{Corollary}
\theoremstyle{thmstyletwo}
\newtheorem{remark}{Remark}

\begin{document}

\title[Equivalence testing and run-count design]{Equivalence testing and run-count design for stochastic machine-learning experiments}

\author*[1]{\fnm{Dmytro} \sur{Knopov}}\email{d.knopov@ukma.edu.ua}

\author[1,2]{\fnm{Nadiya} \sur{Shvai}}\email{n.shvay@ukma.edu.ua}

\affil*[1]{\orgdiv{Faculty of Computer Sciences, Department of Mathematics}, \orgname{National University of Kyiv-Mohyla Academy}, \orgaddress{\city{Kyiv}, \country{Ukraine}}}

\affil[2]{\orgname{Cyclope.ai}, \orgaddress{\city{Paris}, \country{France}}}

\abstract{
Claims that two training methods perform equally well often rest on a non-significant difference between two or three runs per method.
We develop a protocol, with its power theory, for certifying practical equivalence in per-run metrics such as the AUROC (area under the receiver operating characteristic curve).
It rests on a two-level (seed~$\times$~item) variance model separating between-run dispersion from item-level noise.
Equivalence within a pre-specified margin is tested by two one-sided tests (TOST), equivalently by a confidence interval inside the margin; a two-level bootstrap covers composite statistics.
A closed-form formula converts a margin and a target power into runs per method; its shifted-$t$ refinement is calibrated against the realised two-level process.
Simulations on a 270-cell grid and under non-Gaussian seed laws confirm the formula within its design-relevant range, from five runs per method.
For four published runs of a joint energy-based model, the failure to certify equivalence was largely decided by design: with the variance calibrated to those runs, two per method certify with probability near three per~cent, whereas certifying 0.01 AUROC at 80~per~cent power needs twelve even under exact equality.
A second case study on a thousand-seed CIFAR-10 zoo and the MultiBERTs checks the formula against real seed laws (within 0.05 in that range) and resolves comparisons that two or three runs leave undecided.
In a two-stage design (pilot, then formula), our experiments suggest that an upper confidence bound on the pilot variance substantially reduces the undersizing seen with its point estimate.
}

\keywords{machine learning, statistical hypothesis testing, equivalence testing, experimental design, statistical power, hierarchical bootstrap}

\maketitle

\section{Introduction}
\label{sec:intro}

A recurring conclusion of empirical machine-learning research is that a new training method performs ``on par'' with an established baseline.
Classical significance testing cannot support this conclusion: failure to reject the hypothesis of no difference is not evidence of no difference, least of all when the comparison rests on two or three training runs \citep{Colas2018}.
At the same time, the variance of benchmark results, from data sampling and from arbitrary training choices such as initialisation and data order, is a first-order effect in modern benchmarks, often of the same order as the improvements being reported \citep{Bouthillier2021}, and the part induced by the training seed alone has been measured across domains \citep{Dodge2020,Picard2021}.
Small-$n$ comparisons that end in a null outcome are therefore routine, and routinely over-interpreted.

Equivalence testing inverts the burden of proof.
One fixes, before the experiment, a margin $\delta$ below which a difference is declared practically irrelevant; the null hypothesis states that the methods differ by at least $\delta$, and rejecting it \emph{certifies} equivalence at a controlled error level \citep{Schuirmann1987,Lakens2017}.
The question that then decides the fate of the experiment is one of design: how many training runs does the certification require?

The concrete motivation for this work is our experimental study \citep{Knopov2026jem} of joint energy-based models \citep{Grathwohl2020}, which compared two training methods and found their macro-averaged areas under the receiver operating characteristic (ROC) curve (AUROC) within $0.006$ of each other, yet two one-sided tests (TOST) at margin $\delta = 0.01$ returned $p = 0.33$ from $n = 2$ runs per method.
As Section~\ref{sec:case} shows, a variance calibrated retrospectively to its own runs leaves that negative outcome very little room to be otherwise: at $n = 2$ the rejection region of the deterministic power approximation is empty, and the realised probability of certification, in a simulation calibrated to the observed variance, is roughly three per~cent.
Turning this experience into theory yields a protocol that answers, before the certification experiment is run and given a pilot or an external estimate of the run-level dispersion, whether an equivalence claim is within reach.

The contributions of this work are methodological rather than the introduction of a new equivalence test.
We develop a framework for applying equivalence testing to stochastic machine-learning experiments, treating independently trained runs as the experimental units and distinguishing explicitly between the setting with an independent test draw for every run and the shared-benchmark setting used in practice.
First, we formalise these two settings with a two-level (seed~$\times$~item) variance model for per-run AUROC: an exact variance decomposition under independent test draws (Proposition~\ref{prop:variance}) and its conditional counterpart for a shared test set (Proposition~\ref{prop:shared}).
The model separates the seed-level dispersion, which only additional runs can reduce, from the item-level noise, which a larger test set shrinks.
Second, we define a seed-level Welch-TOST protocol with confidence-interval duality (Proposition~\ref{prop:duality}) and size control inherited from the intersection--union principle up to the Welch approximation.
Third, we specify a two-level bootstrap for composite per-run statistics and map its finite-sample validity by simulation, over run counts and variance regimes, rather than asserting it (Remark~\ref{rem:bootvalidity}).
Fourth, we derive an approximate power function (Proposition~\ref{prop:power}) with the design formula~\eqref{eq:n} for the required number of runs, validate it on a $270$-cell simulation grid and benchmark it against the exact pooled-model power of \citet{Shieh2016}.
Fifth, we revisit the data of \citet{Knopov2026jem} in a case study that reproduces the published numbers exactly.
It answers the design question: certifying $\pm 0.01$ at $80\%$ power requires about $12$ runs per method even for truly identical methods, not $2$.
Sixth, in a second case study on two public collections of independently trained models with up to a thousand runs (Section~\ref{sec:case2}), we check the design formula against real, non-Gaussian seed laws, exhibit the shared-test-set regime in which the binomial item-level variance exceeds the observed per-run variance, and show the protocol resolving comparisons that two or three runs leave undecided.
The same case study takes the two-stage pilot design end to end and compares two ways of entering the pilot variance into the formula.
Sections~\ref{sec:model}--\ref{sec:power} develop the theory, Section~\ref{sec:sim} reports the simulations, and Sections~\ref{sec:case} and~\ref{sec:case2} the case studies.

\section{Related work}
\label{sec:related}

Equivalence testing originated in pharmacokinetics, where approving a generic drug requires demonstrating that its bioavailability lies within a pre-specified window of the reference formulation.
Schuirmann's two one-sided tests \citep{Schuirmann1987} remain the standard instrument; their size properties follow from the intersection--union principle \citep{BergerHsu1996}, and \citet{Lakens2017} gives a modern primer for experimental scientists.
Exact power and sample-size calculations for TOST are derived by \citet{Shieh2016} for the pooled test and by \citet{JanShieh2017} for its heteroscedastic Welch version; Section~\ref{sec:power} benchmarks our design formula against the former.
For ROC analysis specifically, \citet{Liu2006} constructed TOST-type equivalence and non-inferiority tests for the paired AUROCs of two diagnostic procedures evaluated on the same subjects.
Their setting is a single fixed pair of scoring rules, so the only uncertainty is at the item level; in machine-learning comparisons the scoring rule itself is random, since retraining changes it, and the analysis of \citet{Liu2006} corresponds to the inner level of our model with the seed level absent.
A recent tutorial extends this line with simulation-based power analyses for equivalence testing of ROC curves \citep{Riesthuis2025}, still within a single sampling level.
A structurally close hierarchy has long existed in radiology: multi-reader multi-case (MRMC) ROC analysis treats readers and cases as random effects, through a reader~$\times$~case decomposition of jackknife pseudovalues \citep{DorfmanBerbaumMetz1992} or an analysis of variance with case-induced correlated errors \citep{ObuchowskiRockette1995}, and provides the corresponding power and sample-size methodology \citep{Hillis2011}.
\citet{Chen2012} extended this framework to non-inferiority and equivalence hypotheses, with the corresponding power calculation and study sizing, so the MRMC line already combines a two-level ROC hierarchy, an equivalence layer and sample-size methodology.
Our seed level plays the role of the reader level; what the MRMC line does not contain is the reinterpretation of the outer level as the retraining randomness of a single learning method.
The present work adapts the design problem to that setting, makes the conditional estimand of a shared benchmark explicit, and validates the finite-sample behaviour of the protocol in regimes calibrated to machine-learning practice, at two to five runs per method.

A separate line of work quantifies the statistical fragility of machine-learning benchmarking.
\citet{Dietterich1998} initiated the systematic study of hypothesis tests for comparing learning algorithms, \citet{NadeauBengio2003} analysed the variance of generalisation-error estimates across resampled training sets and the inference such estimates permit, and \citet{Demsar2006} recommended the now-standard non-parametric tests for comparing classifiers over several datasets.
\citet{Colas2018} give power analyses for detecting performance differences in deep reinforcement learning, answering ``how many seeds'' for difference testing, and \citet{Card2020} bring power analysis to evaluation in natural language processing, sizing test sets and human-rating studies rather than training runs, and find many common experimental settings to be underpowered.
The magnitude of seed-induced variation is by now documented across domains: \citet{Bouthillier2021} decompose benchmark variance into its sources and study how that variance corrupts the detection of improvements; \citet{Dodge2020} and \citet{Picard2021} measure the spread induced by the training seed alone, together over thousands of runs; \citet{Madaan2024} quantify seed-induced variance in contemporary language-model benchmarks.
\citet{Agarwal2021} advocate interval estimates from a stratified bootstrap over runs for reinforcement-learning benchmarks with few runs, \citet{Miller2024} brings clustered standard errors, paired analyses and power calculations to language-model evaluations, and \citet{Longjohn2025} treat uncertainty quantification for aggregate benchmark metrics.
Closest in structure to the resampling scheme of Section~\ref{sec:model:boot} are the MultiBERTs of \citet{Sellam2022}: $25$ pretraining replicas of BERT together with a Multi-Bootstrap that resamples pretraining seeds and evaluation examples jointly to place confidence intervals on differences between models; their per-example predictions are one of the two data sources of Section~\ref{sec:case2}, the other being the Model Zoos of \citet{Schuerholt2022}, populations of up to a thousand independently trained networks released as checkpoints.
All of these target the detection of differences or the honest reporting of uncertainty.
The Bayesian line of \citet{Benavoli2017} comes closest to the certification question: the region of practical equivalence (ROPE) they adopt answers it through posterior mass rather than through a size-controlled test, and without a run-count design formula.
Where TOST itself appears in the machine-learning literature, it is applied at the annotation or item level without a run-level hierarchy, for instance to test whether a large language model is interchangeable with human annotators \citep{HeLeng2025}.

We are not aware of prior work that places frequentist equivalence certification on the run-level hierarchy of stochastic training.
The closest combination of ingredients is the MRMC equivalence methodology \citep{Chen2012}, built for human readers and fixed diagnostic modalities; the remaining precedents supply single ingredients: the equivalence layer without a run-level hierarchy, at the item level \citep{Liu2006,Riesthuis2025} or at the level of cross-validation folds and datasets \citep{Benavoli2017}, the power analysis for difference testing over runs \citep{Colas2018} or over test items \citep{Card2020}, or the joint resampling of pretraining seeds and evaluation examples without an equivalence layer or a design formula \citep{Sellam2022}.
The formal ingredients themselves are deliberately elementary consequences of classical results; the contribution we claim is not the first combination of hierarchy, equivalence and sizing but their transfer to training runs as the experimental unit, with the conditional benchmark estimand made explicit, a transparent Welch-TOST on per-run AUROCs, and a finite-sample validation aimed at the very small run counts of machine-learning experiments.

\section{Statistical model and equivalence testing procedure}
\label{sec:model}

\subsection{Two-level model}
\label{sec:model:twolevel}

Two training methods $g \in \{A, B\}$ are compared on a common detection task.
Each method is trained $n_g$ times under identical settings except for the random seed, so the runs $s = 1, \dots, n_g$ are independent replicates of the same training procedure.
Every run assigns real-valued scores to a test collection of $m_0$ in-distribution (ID) and $m_1$ out-of-distribution (OOD) items (images, in the case study), and the quality of the run is summarised by the AUROC.

Let $X_1, \dots, X_{m_0}$ denote the ID scores and $Y_1, \dots, Y_{m_1}$ the OOD scores produced by run $(g, s)$.
The test items are drawn i.i.d.\ from their respective populations, the ID and OOD samples independent of each other and of the trained runs, so that conditionally on the run the scores are i.i.d.\ within each sample.
The empirical AUROC is the normalised Mann--Whitney statistic \citep{MannWhitney1947,Bamber1975}
\begin{equation}
\hat\theta_{g,s} = \frac{1}{m_0 m_1} \sum_{i=1}^{m_0} \sum_{j=1}^{m_1}
  \bigl(\mathbf{1}\{X_i < Y_j\} + \tfrac{1}{2} \mathbf{1}\{X_i = Y_j\}\bigr),
\label{eq:auroc}
\end{equation}
an unbiased estimator of the population AUROC of that run, $\theta_{g,s} = \Pr(X < Y) + \tfrac{1}{2}\Pr(X = Y)$.

Retraining with a new seed changes the learned scoring function, so $\theta_{g,s}$ itself varies across runs.
We adopt the two-level (seed~$\times$~item) model
\begin{equation}
\theta_{g,s} = \mu_g + a_{g,s}, \qquad
a_{g,s} \ \text{i.i.d.} \ (0, \sigma_g^2),
\label{eq:model}
\end{equation}
where $\mu_g$ is the expected AUROC of method $g$ and $\sigma_g^2$, the variance of the zero-mean seed effects $a_{g,s}$, is its between-run (seed-level) variance.
The law of the seed effects is left unspecified beyond these two moments, which is all that the variance results of this section use; the Welch-TOST and the power approximation additionally lean on approximate normality of the run means (Sections~\ref{sec:power} and~\ref{sec:discussion}); since $\theta_{g,s}$ is a probability, the law is understood to be supported within $[-\mu_g,\, 1 - \mu_g]$.
Conditionally on the run, that is, on the learned scoring function, which determines more than the value of $\theta_{g,s}$ alone, $\hat\theta_{g,s}$ fluctuates around $\theta_{g,s}$ with the image-level variance $v_{g,s} = \operatorname{Var}\bigl(\hat\theta_{g,s} \mid \mathcal{F}_{g,s}\bigr)$, where $\mathcal{F}_{g,s}$ denotes the trained run; it is estimated nonparametrically from placement values (the normalised ranks of each item within the opposite sample) by the method of \citet{DeLong1988}, in the rank-based implementation of \citet{SunXu2014}.
Write $\bar v_g = \mathbb{E}\, v_{g,s}$.

The estimand is the method difference $\Delta = \mu_A - \mu_B$, estimated by $\hat\Delta = \bar\theta_A - \bar\theta_B$ with $\bar\theta_g = n_g^{-1} \sum_{s} \hat\theta_{g,s}$.

\begin{proposition}[variance decomposition]
\label{prop:variance}
Under model~\eqref{eq:model} and the sampling assumptions above, with every run evaluated on its own test sample drawn independently across runs,
\begin{equation}
\operatorname{Var} \hat\theta_{g,s} = \sigma_g^2 + \bar v_g,
\qquad
\operatorname{Var} \hat\Delta
  = \sum_{g \in \{A, B\}} \frac{\sigma_g^2 + \bar v_g}{n_g}.
\label{eq:vardecomp}
\end{equation}
\end{proposition}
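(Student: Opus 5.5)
The first identity follows from the law of total variance, conditioning on the trained run $\mathcal{F}_{g,s}$. Two conditional moments are needed. The first is $\mathbb{E}(\hat\theta_{g,s} \mid \mathcal{F}_{g,s}) = \theta_{g,s}$. This is the unbiasedness of the Mann--Whitney statistic~\eqref{eq:auroc} given the scoring function. Conditionally on the run, the test items are i.i.d.\ and independent of training, so each summand $\mathbf{1}\{X_i<Y_j\}+\tfrac12\mathbf{1}\{X_i=Y_j\}$ has conditional mean $\theta_{g,s}$. The second is $\operatorname{Var}(\hat\theta_{g,s}\mid\mathcal{F}_{g,s}) = v_{g,s}$, which holds by definition. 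The decomposition
\[
\operatorname{Var}\hat\theta_{g,s} = \mathbb{E}\,\operatorname{Var}(\hat\theta_{g,s}\mid\mathcal{F}_{g,s}) + \operatorname{Var}\,\mathbb{E}(\hat\theta_{g,s}\mid\mathcal{F}_{g,s}) = \bar v_g + \operatorname{Var}\theta_{g,s}
\]
then gives the result, because $\operatorname{Var}\theta_{g,s} = \operatorname{Var} a_{g,s} = \sigma_g^2$ by~\eqref{eq:model}. All moments are finite since $\hat\theta_{g,s}\in[0,1]$.

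The second identity requires that the $\hat\theta_{g,s}$ be mutually independent across all pairs $(g,s)$. The run-level objects $(\mathcal{F}_{g,s})$ are independent replicates across seeds and across the two methods. Each run's test sample is drawn independently of the others and of all training. Hence the pairs $(\mathcal{F}_{g,s}, \text{test sample}_{g,s})$ are independent across $(g,s)$. Each $\hat\theta_{g,s}$ is a measurable function of its own pair, so the $\hat\theta_{g,s}$ are independent. Within method $g$ they are also identically distributed, so each has the common variance $\sigma_g^2+\bar v_g$ from the first step. Consequently $\operatorname{Var}\bar\theta_g = (\sigma_g^2+\bar v_g)/n_g$. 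Since $\bar\theta_A$ and $\bar\theta_B$ are independent, $\operatorname{Var}\hat\Delta = \operatorname{Var}\bar\theta_A + \operatorname{Var}\bar\theta_B$, which is~\eqref{eq:vardecomp}.

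The only step requiring care is the independence argument, and with it the role of the hypothesis that each run has its own test sample. Under a shared test set, the $\hat\theta_{g,s}$ would be correlated through the common items, and covariance terms would enter the sum. The plan is to make explicit that every covariance across runs vanishes because the runs are built from independent (run, sample) pairs. This is where the proof uses the independent-draw assumption and where it would fail in the setting of Proposition~\ref{prop:shared}. A minor point is that $v_{g,s}$ depends on the entire scoring function, not only on $\theta_{g,s}$. This is why the conditioning is on $\mathcal{F}_{g,s}$ rather than on $\theta_{g,s}$, and the tower property works for either choice.
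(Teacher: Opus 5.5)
Your proposal is correct and follows the paper's proof. It applies the law of total variance conditioned on the trained run $\mathcal{F}_{g,s}$, using the conditional unbiasedness of the Mann--Whitney statistic, and then uses independence of runs within and between methods for the second identity; the only difference is that you spell out why independent per-run test samples make the $\hat\theta_{g,s}$ mutually independent, which the paper states in one line.
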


\begin{proof}
Since~\eqref{eq:auroc} is unbiased given the run, $\mathbb{E}[\hat\theta_{g,s} \mid \mathcal{F}_{g,s}] = \theta_{g,s}$, and the law of total variance yields
$\operatorname{Var} \hat\theta_{g,s}
 = \mathbb{E}\bigl[\operatorname{Var}(\hat\theta_{g,s} \mid \mathcal{F}_{g,s})\bigr]
 + \operatorname{Var}\bigl(\mathbb{E}[\hat\theta_{g,s} \mid \mathcal{F}_{g,s}]\bigr)
 = \bar v_g + \sigma_g^2$.
Runs are independent within and between methods, whence the second identity.
\end{proof}

Proposition~\ref{prop:variance} has two design consequences.
First, with $n_g \ge 2$ runs per method, the seed-level sample variance $s_g^2 = (n_g - 1)^{-1} \sum_s (\hat\theta_{g,s} - \bar\theta_g)^2$ estimates the \emph{whole} per-run variance $\sigma_g^2 + \bar v_g$ without bias, so inference that treats per-run AUROCs as i.i.d.\ observations automatically carries the image-level uncertainty and requires no explicit decomposition.
Second, enlarging the test sample shrinks only $\bar v_g$, which is of order $m_0^{-1} + m_1^{-1}$ \citep{HanleyMcNeil1982}: the seed-level floor $\sigma_g^2$ remains, and once $\bar v_g \ll \sigma_g^2$ the precision of $\hat\Delta$ is governed by the number of training runs alone.

In practice all runs are usually scored on the \emph{same} test sample rather than on independent draws, so the seed and image factors are crossed rather than nested.
The design result for this case needs nothing beyond the sampling model of the runs itself, because conditioning on the sample removes the image-sampling level altogether.

\begin{proposition}[shared test set, conditional estimand]
\label{prop:shared}
Let all runs be scored on one common test sample $T$, drawn independently of the runs, and let the runs be, as in Section~\ref{sec:model:twolevel}, independent within and between methods and identically distributed within each method.
Write $\tau_g^2(T) = \operatorname{Var}\bigl(\hat\theta_{g,s} \mid T\bigr)$ for the conditional per-run variance and $\Delta(T) = \mathbb{E}\bigl[\hat\Delta \mid T\bigr]$ for the conditional estimand.
Then
\begin{equation}
\operatorname{Var}\bigl(\hat\Delta \mid T\bigr)
  = \sum_{g} \frac{\tau_g^2(T)}{n_g},
\quad
\mathbb{E}\bigl[s_g^2 \mid T\bigr] = \tau_g^2(T),
\label{eq:sharedvar}
\end{equation}
and the seed-level procedures of this section apply verbatim to $\Delta(T)$, with $s_g^2$ an unbiased plug-in for the \emph{whole} conditional per-run variance.
\end{proposition}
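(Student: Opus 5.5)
The plan is to condition on $T$ throughout and reduce everything to the elementary fact that, given $T$, the per-run AUROCs $\hat\theta_{g,s}$ are i.i.d.\ within each method and independent across methods. The key observation is that $\hat\theta_{g,s}$ is a deterministic function of the trained run $\mathcal{F}_{g,s}$ and of the common sample $T$, say $\hat\theta_{g,s} = h(\mathcal{F}_{g,s}, T)$. Since $T$ is drawn independently of the runs, and the runs are mutually independent and identically distributed within each method, conditioning on $T$ leaves the joint law of the runs unchanged. Hence, given $T$, the family $\{h(\mathcal{F}_{g,s}, T)\}_{g,s}$ is independent, and within each $g$ it is identically distributed. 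Formally I would obtain this from the product structure of the joint law of $(T, \mathcal{F}_{A,1}, \dots, \mathcal{F}_{B,n_B})$, for instance by computing conditional expectations of products of bounded functions via Fubini. Boundedness $\hat\theta_{g,s} \in [0,1]$ guarantees that all conditional moments exist.

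With this conditional i.i.d.\ structure in hand, the two identities in~\eqref{eq:sharedvar} are the classical computations carried out conditionally.

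For the variance, the mean $\bar\theta_g$ averages $n_g$ conditionally independent terms, each with conditional variance $\tau_g^2(T)$, so $\operatorname{Var}(\bar\theta_g \mid T) = \tau_g^2(T)/n_g$. Conditional independence between methods kills the cross term in $\operatorname{Var}(\bar\theta_A - \bar\theta_B \mid T)$, which gives the first identity.

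For the second identity, I expand $\sum_s (\hat\theta_{g,s} - \bar\theta_g)^2 = \sum_s (\hat\theta_{g,s} - m_g(T))^2 - n_g(\bar\theta_g - m_g(T))^2$, where $m_g(T) = \mathbb{E}[\hat\theta_{g,s} \mid T]$. Taking conditional expectations yields $n_g\tau_g^2(T) - \tau_g^2(T)$, and dividing by $n_g - 1$ gives $\mathbb{E}[s_g^2 \mid T] = \tau_g^2(T)$. As a by-product, $\Delta(T) = m_A(T) - m_B(T)$, and $\hat\Delta$ is conditionally unbiased for it.

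For the final claim, that the seed-level procedures apply verbatim to $\Delta(T)$, I would argue as follows. Those procedures (the Welch-TOST, its interval dual, and the bootstrap over runs) use only the per-run values $\hat\theta_{g,s}$. Their justification rests on the values being independent across runs, identically distributed within a method, and having mean $\mu_g$ and variance $\sigma_g^2 + \bar v_g$. Conditionally on $T$, exactly these properties hold with $m_g(T)$ and $\tau_g^2(T)$ in place of $\mu_g$ and $\sigma_g^2 + \bar v_g$. Every conditional guarantee (size, coverage) therefore holds given $T$, and, being uniform in $T$, it also holds unconditionally after averaging over $T$.

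I expect the main obstacle to be the first step: stating the conditional independence cleanly rather than asserting it. This includes noting that $\tau_g^2(T)$ is a genuine function of $T$ and does not decompose as $\sigma_g^2 + \bar v_g$ (the item effects are now shared). I would handle it with a regular conditional distribution, or simply by freezing $T = t$ and working under the product measure of the runs alone, where everything reduces to the unconditional i.i.d.\ case.
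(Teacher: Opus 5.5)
Your proposal is correct and follows the paper's proof. Both condition on $T$, note that the per-run values are then i.i.d.\ within each method and independent across methods, and apply the standard i.i.d.\ identities; you additionally supply the freezing-lemma justification and the algebra for $s_g^2$ that the paper leaves implicit.

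Your closing claim goes beyond the statement and is looser than the rest: that the conditional guarantees hold uniformly in $T$ and therefore also unconditionally. The Welch size is itself only approximate, and it depends on the shape of the conditional law of $\hat\theta_{g,s}$ given $T$, which the proposition does not control; the paper says as much in Section~\ref{sec:sim}.
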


\begin{proof}
Given $T$, the runs of method $g$ are independent replicates of one experiment (train, then score the fixed sample), so the $\hat\theta_{g,s}$ are i.i.d.\ with mean $\mathbb{E}[\hat\theta_{g,s} \mid T]$ and variance $\tau_g^2(T)$; both identities are the standard i.i.d.\ facts applied conditionally.
\end{proof}

\begin{remark}[the two estimands]
\label{rem:estimand}
The conditional estimand $\Delta(T)$, the method difference on this test collection, is usually the question actually asked in benchmark comparisons, and it is the reading we adopt for the case study; note that seed effects and residual image effects are then not separated, since $s_g^2$ captures their conditional total.
The unconditional $\Delta$ of model~\eqref{eq:model} is a different target.
Under the additive decomposition $\hat\theta_{g,s} = \theta_{g,s} + c(T) + \varepsilon_{g,s}$, with a sample effect $c(T)$ shared by all runs of both methods and run-specific residuals satisfying both $\mathbb{E}[\varepsilon_{g,s} \mid \mathcal{F}_{g,s}] = 0$ (no residual run effect) and $\mathbb{E}[\varepsilon_{g,s} \mid T] = 0$ (no residual sample effect), the shift $c(T)$ cancels in $\hat\Delta$ and the residual means vanish given $T$, so $\Delta(T) = \Delta$ and the two readings coincide.
If instead a method--sample interaction $d_g(T)$ is present, $\Delta(T) = \Delta + d_A(T) - d_B(T)$: the interaction does not decrease in the number of runs and is invisible to $s_g^2$, so a single shared sample identifies $\Delta(T)$ but cannot separate $\Delta$ from it.
What recovering $\Delta$ then requires depends on which ``unconditional'' is meant, and two levels must be kept apart.
When the items of $T$ are themselves an i.i.d.\ sample from a fixed population $P$, the $\Delta$ of model~\eqref{eq:model} is the difference over $P$, the realised interaction is item-sampling noise that averages out over redraws of $T$, and its contribution to the uncertainty of $\hat\Delta$ is recoverable from the one sample at hand: by item resampling, by the DeLong estimator, or, by design, by drawing a fresh sample from $P$ for every run (the regime of Proposition~\ref{prop:variance}).
The choice of the benchmark itself, of $P$ with its idiosyncrasies relative to the intended domain, is a level that neither resampling nor fresh draws from the same $P$ reach: only several benchmarks, or an explicit model of how benchmarks arise, speak to it.
The between-run correlation of per-item scores in the case-study data ($0.79$--$0.92$, Section~\ref{sec:case}) indicates substantial shared image structure, though it does not by itself identify the AUROC-level components.
\end{remark}

\subsection{Seed-level equivalence test}
\label{sec:model:tost}

A claim of practical equivalence must invert the usual testing logic: the null hypothesis is a relevant difference,
\begin{equation}
H_0\colon |\Delta| \ge \delta
\quad \text{against} \quad
H_1\colon |\Delta| < \delta,
\label{eq:hypotheses}
\end{equation}
where the margin $\delta$ is fixed before the experiment.
Schuirmann's two one-sided tests \citep{Schuirmann1987,Lakens2017} reject $H_0$ at level $\alpha$ when both one-sided nulls $\Delta \le -\delta$ and $\Delta \ge \delta$ are rejected at level $\alpha$.
With per-run AUROCs as observations the natural statistics are those of \citet{Welch1947}
\begin{equation}
T_{-} = \frac{\hat\Delta + \delta}{\widehat{\mathrm{SE}}},
\qquad
T_{+} = \frac{\hat\Delta - \delta}{\widehat{\mathrm{SE}}},
\label{eq:welch}
\end{equation}
with $\widehat{\mathrm{SE}}^2 = s_A^2/n_A + s_B^2/n_B$, referred to the $t$ distribution with the Welch--Satterthwaite degrees of freedom \citep{Satterthwaite1946}
\begin{equation}
\hat\nu = \frac{\bigl(s_A^2/n_A + s_B^2/n_B\bigr)^2}%
  {\dfrac{s_A^4}{n_A^2 (n_A - 1)} + \dfrac{s_B^4}{n_B^2 (n_B - 1)}}.
\label{eq:df}
\end{equation}
The lower null $\Delta \le -\delta$ is rejected for large $T_-$ ($p_- = 1 - F_{\hat\nu}(T_-)$) and the upper null $\Delta \ge \delta$ for small $T_+$ ($p_+ = F_{\hat\nu}(T_+)$), where $F_{\hat\nu}$ is the central $t$ distribution function; the TOST $p$-value is $p = \max(p_-, p_+)$.
As an intersection--union test, TOST needs no multiplicity correction and controls the size at $\alpha$ whenever each one-sided component holds its level \citep{BergerHsu1996}; the Welch components do so only approximately, since the Behrens--Fisher problem admits no exact solution of the standard $t$ form, and the realised size is therefore verified by simulation in Section~\ref{sec:sim}.

\begin{proposition}[confidence-interval duality]
\label{prop:duality}
TOST rejects $H_0\colon |\Delta| \ge \delta$ at level $\alpha$ if and only if the two-sided $(1 - 2\alpha)$ confidence interval $\hat\Delta \pm t_{1-\alpha,\hat\nu}\, \widehat{\mathrm{SE}}$ lies inside $(-\delta, \delta)$.
\end{proposition}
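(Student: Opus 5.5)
The plan is to unwind both one-sided rejection rules into inequalities on $\hat\Delta$ and then observe that their conjunction is exactly the statement that the interval sits inside the margin. Throughout, assume $\widehat{\mathrm{SE}} > 0$, which holds almost surely when some $s_g^2 > 0$, and write $t^\ast = t_{1-\alpha,\hat\nu}$ for the upper $\alpha$ quantile of $F_{\hat\nu}$. Both components are referred to the same $\hat\nu$ and the same $\widehat{\mathrm{SE}}$, so one critical value serves both; this shared quantile is the whole reason the duality is exact rather than approximate.

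The first step handles the lower null $\Delta \le -\delta$. Its $p$-value is $p_- = 1 - F_{\hat\nu}(T_-)$, and rejection at level $\alpha$ means $p_- \le \alpha$. Because $F_{\hat\nu}$ is continuous and strictly increasing, this is equivalent to $T_- \ge t^\ast$, that is, to $\hat\Delta + \delta \ge t^\ast\,\widehat{\mathrm{SE}}$. Rearranged, this reads $\hat\Delta - t^\ast\,\widehat{\mathrm{SE}} \ge -\delta$: the lower endpoint of the interval lies at or above $-\delta$.

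The second step treats the upper null $\Delta \ge \delta$ in the same way. Rejection means $p_+ = F_{\hat\nu}(T_+) \le \alpha$. Using the symmetry of the central $t$ law, $F_{\hat\nu}^{-1}(\alpha) = -t^\ast$, so this is equivalent to $T_+ \le -t^\ast$. That gives $\hat\Delta + t^\ast\,\widehat{\mathrm{SE}} \le \delta$: the upper endpoint lies at or below $\delta$.

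The third step combines the two. TOST rejects exactly when $p = \max(p_-,p_+) \le \alpha$, and this holds if and only if both components reject. By the first two steps, that is the same as requiring both endpoints of $\hat\Delta \pm t^\ast\,\widehat{\mathrm{SE}}$ to lie within the margin. The interval itself has confidence level $1-2\alpha$, since it excludes mass $\alpha$ in each tail.

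The only real obstacle is boundary bookkeeping. The conventions "$p \le \alpha$" versus "$p<\alpha$" must be matched with a closed versus an open interval $(-\delta,\delta)$. I would fix the convention so that the two correspond, which gives the equivalence exactly. I would then note that the boundary events $T_\pm = \mp t^\ast$ have probability zero under a continuous law, so either convention gives the proposition as stated.
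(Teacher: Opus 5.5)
Your proof is correct and follows the paper's own argument. Both unwind each one-sided rejection, through the shared quantile $t_{1-\alpha,\hat\nu}$ and the symmetry of the central $t$, into a bound on one endpoint of $\hat\Delta \pm t_{1-\alpha,\hat\nu}\widehat{\mathrm{SE}}$, and read the conjunction as the inclusion. The paper adopts the strict convention $p_\pm < \alpha$ from the start, which matches the open interval $(-\delta,\delta)$ exactly. Your convention switch handles this, whereas the almost-sure argument would only give the equivalence up to a null event, not the deterministic ``if and only if''.
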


\begin{proof}
$p_- < \alpha \iff T_- > t_{1-\alpha,\hat\nu} \iff \hat\Delta - t_{1-\alpha,\hat\nu} \widehat{\mathrm{SE}} > -\delta$, and $p_+ < \alpha \iff T_+ < -t_{1-\alpha,\hat\nu} \iff \hat\Delta + t_{1-\alpha,\hat\nu} \widehat{\mathrm{SE}} < \delta$.
Their conjunction is exactly the stated inclusion.
\end{proof}

Reporting the $(1 - 2\alpha)$ interval together with the margin is the form we recommend: it communicates the decision, its sensitivity to the choice of $\delta$, and the direction of any residual difference at once.
The duality is verified numerically, over randomised designs, in the test suite of our implementation.

\subsection{Two-level bootstrap}
\label{sec:model:boot}

A macro-average over several OOD test sets is still a per-run scalar, so the seed-level test above applies to it directly and remains the operative instrument of certification.
Bootstrap resampling enters in a supporting role: it is the de facto practice for reporting the uncertainty of composite statistics (the source study of Section~\ref{sec:case} reports exactly such an interval), and it propagates the item-level uncertainty of composite per-run statistics, such as macro-averages over several test sets, without a variance formula.
We analyse the two-level, crossed resampling scheme that mirrors the design of the experiment.
At the outer level, runs (seeds) are resampled with replacement within each method; at the inner level, one resample of ID item indices and one per OOD set are drawn per bootstrap iteration and shared by all selected runs of both methods, replaying the paired evaluation on a common test sample.
The percentile interval of the resulting bootstrap distribution of $\hat\Delta$ is reported \citep{DavisonHinkley1997}.
We use it for uncertainty reporting rather than for the equivalence decision itself; a bootstrap TOST would require the percentile interval at level $1 - 2\alpha$ (Proposition~\ref{prop:duality}), and the coverage caveats below apply to it with full force.
The two instruments also address different readings of the estimand: the seed-level test is conditional on the realised test collection $T$ (Remark~\ref{rem:estimand}), whereas the item resample makes the bootstrap interval speak to the difference unconditional over item sampling, the recoverable level of Remark~\ref{rem:estimand}, and the coverage study of Section~\ref{sec:sim} evaluates it against that target.

\begin{remark}[domain of validity]
\label{rem:bootvalidity}
Resampling hierarchical data is delicate: the consistency of a scheme depends on the model assumed for the clustered data \citep{FieldWelsh2007}, and which levels to resample is a design decision \citep[Section~3.8]{DavisonHinkley1997}.
We therefore do not claim general consistency of the scheme above; its finite-sample coverage is mapped by simulation in Section~\ref{sec:sim}.
Three regimes emerge.
When the test set is shared and the seed-level variance is comparable to the residual image-level one, coverage is close to nominal.
When the run-specific image-level residual dominates, the scheme is conservative: independent resampling of two crossed factors counts that residual more than once, through each factor and through their product, so that for a linear statistic the corresponding interaction variance enters about three times in the limit \citep{McCullagh2000,Owen2007,OwenEckles2012}; for the AUROC we take this as indicating the direction of the bias and measure its size by simulation.
Conversely, with very few runs and a dominant seed level the outer resample is anti-conservative, since the bootstrap of a mean of $n$ values understates its variance by the factor $(n - 1)/n$, which is severe at $n = 2$--$3$.
In the limit of a negligible item level the $95\%$ percentile interval at $n = 2$ reduces to the interval from the smallest to the largest difference between a run of $A$ and a run of $B$; it covers a true difference of zero exactly when the two within-method ranges overlap, an event of probability $2/3$ for the ideal percentile interval under a common continuous run-level law.
Both deviations are quantified in Section~\ref{sec:sim}.
\end{remark}

\section{Power analysis and the required number of runs}
\label{sec:power}

With the test fixed, its fate is decided by the number of runs it is given, which is the design question of the introduction.
Write $\tau_g^2 = \sigma_g^2 + \bar v_g$ for the per-run variance of method $g$ (Proposition~\ref{prop:variance}; under a shared test set $\tau_g^2$ is read as the conditional per-run variance of Proposition~\ref{prop:shared}, which $s_g^2$ estimates directly).
We consider the equal-allocation design $n_A = n_B = n$ with a common $\tau^2 = \tau_A^2 = \tau_B^2$, the case of practical interest; then
\begin{equation}
\mathrm{SE}(n) = \sqrt{\frac{2 \tau^2}{n}}
             = \sqrt{\frac{2 \bigl(\sigma^2 + \bar v\bigr)}{n}}.
\label{eq:se}
\end{equation}

\begin{proposition}[approximate power of TOST]
\label{prop:power}
Let $\hat\Delta \sim \mathcal{N}\bigl(\Delta, \mathrm{SE}^2(n)\bigr)$ with $\mathrm{SE}(n)$ treated as known.
Then the probability that TOST at level $\alpha$ with margin $\delta$ declares equivalence is
\begin{equation}
\pi(\Delta)
  = \Phi\!\Bigl(\tfrac{\delta-\Delta}{\mathrm{SE}(n)} - z_{1-\alpha}\Bigr)
  + \Phi\!\Bigl(\tfrac{\delta+\Delta}{\mathrm{SE}(n)} - z_{1-\alpha}\Bigr) - 1,
\label{eq:power}
\end{equation}
truncated below at zero, where $\Phi$ is the standard normal distribution function and $z_{1-\alpha}$ its quantile.
\end{proposition}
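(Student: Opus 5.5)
With $\mathrm{SE}=\mathrm{SE}(n)$ treated as known, the Welch statistics~\eqref{eq:welch} become $z$-statistics and the $t$ quantile $t_{1-\alpha,\hat\nu}$ is replaced by $z_{1-\alpha}$. I will therefore compute the rejection probability of the known-variance TOST directly from the normal law of $\hat\Delta$.

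The first step is to use the duality of Proposition~\ref{prop:duality} in its known-variance form. TOST declares equivalence if and only if
\[
\hat\Delta - z_{1-\alpha}\mathrm{SE} > -\delta
\quad\text{and}\quad
\hat\Delta + z_{1-\alpha}\mathrm{SE} < \delta .
\]
Equivalently, $\hat\Delta$ lies in the interval $I = (-\delta + z_{1-\alpha}\mathrm{SE},\ \delta - z_{1-\alpha}\mathrm{SE})$. This interval is nonempty exactly when $\delta > z_{1-\alpha}\mathrm{SE}$, and is empty otherwise.

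Next, standardise with $Z = (\hat\Delta - \Delta)/\mathrm{SE} \sim \mathcal{N}(0,1)$. When $I$ is nonempty, the probability of the event is
\[
\Phi\!\Bigl(\tfrac{\delta-\Delta}{\mathrm{SE}} - z_{1-\alpha}\Bigr) - \Phi\!\Bigl(\tfrac{-\delta-\Delta}{\mathrm{SE}} + z_{1-\alpha}\Bigr).
\]
Applying $\Phi(-x) = 1 - \Phi(x)$ to the second term turns it into $1 - \Phi\bigl(\tfrac{\delta+\Delta}{\mathrm{SE}} - z_{1-\alpha}\bigr)$, which gives exactly~\eqref{eq:power}.

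The only delicate point is the truncation. When $\delta \le z_{1-\alpha}\mathrm{SE}$, the rejection region $I$ is empty, so the power is $0$. The formal difference $\Phi(b) - \Phi(a)$ with upper limit $b \le$ lower limit $a$ is then nonpositive, which is precisely why the expression is truncated below at zero. Conversely, when $I$ is nonempty we have $b > a$, so the expression is positive and no truncation is needed. In both cases the truncated expression equals the rejection probability. This also recovers the remark in the introduction that the rejection region can be empty at $n = 2$.
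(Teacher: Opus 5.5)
Your proposal is correct and follows the paper's proof essentially step for step: you use the known-variance form of the duality in Proposition~\ref{prop:duality}, standardise $\hat\Delta$, apply $\Phi(-x) = 1 - \Phi(x)$, and note that the rejection region is empty exactly when $\delta \le z_{1-\alpha}\,\mathrm{SE}(n)$. Your explicit check that the gap between the two arguments of $\Phi$ is $2\delta/\mathrm{SE}(n) - 2z_{1-\alpha}$ makes the justification of the truncation at zero slightly more transparent than the paper's closing clause.
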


\begin{proof}
By Proposition~\ref{prop:duality}, with $z_{1-\alpha}$ in place of the $t$ quantile, equivalence is declared if and only if $-\delta + z_{1-\alpha} \mathrm{SE}(n) < \hat\Delta < \delta - z_{1-\alpha} \mathrm{SE}(n)$.
Since $\hat\Delta \sim \mathcal{N}(\Delta, \mathrm{SE}^2(n))$, this probability equals
$\Phi\bigl(\frac{\delta-\Delta}{\mathrm{SE}(n)} - z_{1-\alpha}\bigr)
 - \Phi\bigl(-\frac{\delta+\Delta}{\mathrm{SE}(n)} + z_{1-\alpha}\bigr)$,
which is~\eqref{eq:power}; the rejection region, hence the power, is empty when $\delta \le z_{1-\alpha} \mathrm{SE}(n)$.
\end{proof}

\begin{corollary}[design formula]
\label{cor:n}
For true difference $\Delta = 0$, target power $\pi^{*}$ and margin $\delta$, the smallest sufficient number of runs per method satisfies
\begin{equation}
n \;\ge\; \frac{2\bigl(\sigma^2 + \bar v\bigr)
  \bigl(z_{1-\alpha} + z_{(1+\pi^{*})/2}\bigr)^{2}}{\delta^{2}}.
\label{eq:n}
\end{equation}
\end{corollary}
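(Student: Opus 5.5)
The corollary will follow by specialising the power function~\eqref{eq:power} of Proposition~\ref{prop:power} to $\Delta = 0$ and inverting it in $n$. At $\Delta = 0$ the two terms coincide, so
\[
\pi(0) = 2\,\Phi\!\Bigl(\tfrac{\delta}{\mathrm{SE}(n)} - z_{1-\alpha}\Bigr) - 1,
\]
truncated below at zero. Write $x(n) = \delta/\mathrm{SE}(n) - z_{1-\alpha}$. The requirement $\pi(0) \ge \pi^{*}$ with $\pi^{*} \in (0,1)$ then reads $\Phi(x(n)) \ge (1+\pi^{*})/2$. Because $\pi^{*} > 0$, this condition forces $x(n) > 0$, so we are away from the truncated region, where the rejection region is empty. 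Hence the truncation plays no role once the target is positive.

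Next, $\Phi$ is strictly increasing, so the condition is equivalent to $x(n) \ge z_{(1+\pi^{*})/2}$. That is,
\[
\frac{\delta}{\mathrm{SE}(n)} \ge z_{1-\alpha} + z_{(1+\pi^{*})/2}.
\]
Substituting $\mathrm{SE}(n) = \sqrt{2(\sigma^2 + \bar v)/n}$ from~\eqref{eq:se} turns this into $\sqrt{n}\,\delta \ge \sqrt{2(\sigma^2+\bar v)}\,\bigl(z_{1-\alpha} + z_{(1+\pi^{*})/2}\bigr)$. Both sides are positive, since $\alpha < 1/2$ makes $z_{1-\alpha} > 0$ and $\pi^{*} > 0$ makes the second quantile positive. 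Squaring therefore preserves the inequality and gives~\eqref{eq:n}.

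Finally, $\pi(0)$ is nondecreasing in $n$, because $\mathrm{SE}(n)$ decreases in $n$ and $\Phi$ is increasing. So the set of $n$ achieving the target is an up-set, and the smallest sufficient integer is the ceiling of the right-hand side of~\eqref{eq:n}. This justifies reading~\eqref{eq:n} as characterising the minimal run count.

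There is no real obstacle; the only care points are the following.
\begin{itemize}
\item The truncation at zero must be handled. Since $\pi^{*} > 0$, any $n$ meeting the target lies in the non-truncated branch.
\item Both sides must be positive before squaring, which the sign conditions on the quantiles above ensure.
\item The result inherits the approximation of Proposition~\ref{prop:power}, namely a known $\mathrm{SE}$ and normal quantiles in place of $t$ quantiles. So~\eqref{eq:n} is sufficient only within that approximation, which the later shifted-$t$ refinement addresses.
\end{itemize}
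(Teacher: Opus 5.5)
Your proof is correct and follows the paper's own argument: specialise \eqref{eq:power} to $\Delta = 0$, require $2\Phi\bigl(\delta/\mathrm{SE}(n) - z_{1-\alpha}\bigr) - 1 \ge \pi^{*}$, invert $\Phi$ and substitute \eqref{eq:se}. Your extra care points are details the paper leaves implicit: the truncation is inactive for $\pi^{*} > 0$, both sides are positive before squaring, and $\pi(0)$ is monotone in $n$, which is what makes the bound characterise the minimal run count.
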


\begin{proof}
At $\Delta = 0$ formula~\eqref{eq:power} reads $\pi(0) = 2\Phi\bigl(\delta/\mathrm{SE}(n) - z_{1-\alpha}\bigr) - 1$; setting $\pi(0) \ge \pi^{*}$ and solving for $n$ through~\eqref{eq:se} gives~\eqref{eq:n}.
\end{proof}

For the small $n$ typical of machine-learning experiments we sharpen the normal approximation by a shifted-$t$ heuristic: the quantile and distribution function in~\eqref{eq:power} are replaced by their central-$t$ counterparts,
\begin{equation}
\pi_t(\Delta)
  = F_{\nu}\!\bigl(\tfrac{\delta-\Delta}{\mathrm{SE}(n)} - t_{1-\alpha,\nu}\bigr)
  + F_{\nu}\!\bigl(\tfrac{\delta+\Delta}{\mathrm{SE}(n)} - t_{1-\alpha,\nu}\bigr) - 1,
\label{eq:powert}
\end{equation}
truncated below at zero, with $\nu = 2(n-1)$; the design formula~\eqref{eq:n} is then solved by iterating over $n$, with $\nu$ updated at each step.
This is a heuristic, not an exact $t$-theory result: under the alternative the TOST statistics follow noncentral-$t$ laws with a shared random denominator rather than a shifted central $t$, and $\nu = 2(n-1)$ is a design surrogate for the Welch--Satterthwaite value~\eqref{eq:df} realised in the analysis.
Exact power for the pooled, equal-variance normal model follows from the bivariate noncentral $t$ with a common denominator \citep{Owen1965,Phillips1990}; \citet{Shieh2016} computes it as a one-dimensional integral over the pooled variance, and \citet{JanShieh2017} give nearly exact power and sample-size algorithms for the heteroscedastic Welch-TOST.
We implemented the pooled integral of \citet{Shieh2016} next to the heuristic and evaluated both at every cell of the simulation grid of Section~\ref{sec:sim} and at every run count printed in Tables~\ref{tab:design}--\ref{tab:case2pairs} and Section~\ref{sec:case2:pairs}; the comparison is released with the code.
Within the equivalence region and at its boundary ($|\Delta| \le \delta$) the heuristic never exceeds the exact power; outside the region both lie far below $\alpha$ and the heuristic exceeds the exact value by at most $0.003$.
In the design-relevant range of Remark~\ref{rem:powervalidity} the heuristic falls short of the exact power by at most $0.045$ at $n = 5$, $0.018$ at $n = 8$, $0.014$ at $n = 12$ and $0.006$ at $n = 20$; in low-power cells (exact power below $0.3$) the shortfall grows to $0.08$, and at $n = 2$ and $n = 3$ the heuristic reports zero where the exact power reaches $0.31$ and $0.21$.
Inverted for the run count, the two calculations agree in every printed cell except five, where the exact count is one run lower: $n^{*}(90\%)$ at $\delta = 0.015$ in Table~\ref{tab:design}, RTE and MNLI in Table~\ref{tab:case2levels}, and the SST-2 and RTE learning-rate rows of Table~\ref{tab:case2pairs}.
The exact calculation refers to the pooled test, whereas the analysis uses the Welch test, whose random degrees of freedom never exceed $2(n-1)$: on normal run means with a common variance the Welch-TOST realises less power than the pooled one, by up to $0.027$ at $n = 5$, $0.007$ at $n = 8$, $0.004$ at $n = 12$ and $0.003$ at $n = 20$ in the same range, and by up to $0.21$ and $0.09$ at $n = 2$ and $n = 3$ (Monte Carlo with $2 \cdot 10^{5}$ replications per cell, released with the comparison).
In the design-relevant range the heuristic lies between $-0.028$ and $+0.012$ of that Welch power; in low-power cells its shortfall again reaches $0.08$.
We therefore keep the heuristic as the design formula: it is explicit at a given $n$ and transparent, conservative with respect to the exact pooled power, and in the design-relevant range its remaining error is of the size of the pooled-to-Welch gap; the implementation reports all three values, and Section~\ref{sec:sim} validates the heuristic against the realised two-level process.
For a non-zero anticipated difference the same inversion applies to~\eqref{eq:powert}; as power falls when $|\Delta|$ approaches $\delta$, sizing at $\Delta = 0$ yields the smallest $n$.
The equal-variance assumption is also easy to relax: with anticipated per-run variances $\tau_A^2 \ne \tau_B^2$ and equal allocation, $\mathrm{SE}^2(n) = (\tau_A^2 + \tau_B^2)/n$, so~\eqref{eq:n} applies with $2(\sigma^2 + \bar v)$ read as $\tau_A^2 + \tau_B^2$, while the degrees of freedom drop from $2(n - 1)$ to the anticipated Welch--Satterthwaite value $(n - 1)(\tau_A^2 + \tau_B^2)^2 / (\tau_A^4 + \tau_B^4)$, which lowers the power slightly through the $t$ quantile.
Section~\ref{sec:case} quantifies the effect on the case-study estimates.

\begin{figure}[t]
\centering
\includegraphics[width=119mm]{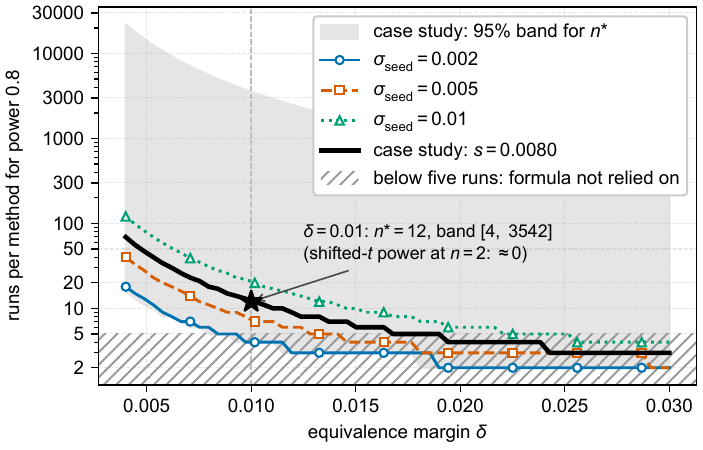}
\caption{Runs per method required for $80\%$ TOST power at $\alpha = 0.05$ and true $\Delta = 0$, as a function of the equivalence margin $\delta$ (logarithmic vertical axis).
The three thin curves with open markers show the design formula~\eqref{eq:n} (shifted-$t$ version) for three seed-level dispersions with the image-level term of the simulation study at $\theta = 0.75$ ($m_0 = m_1 = 10^4$): solid line with circles for $\sigma_{\mathrm{seed}} = 0.002$, dashed line with squares for $\sigma_{\mathrm{seed}} = 0.005$, dotted line with triangles for $\sigma_{\mathrm{seed}} = 0.01$; the thick solid curve is the case study of Section~\ref{sec:case} ($s = 0.0080$, image-level term absorbed into $s^2$).
The star marks $\delta = 0.01$, where $n^{*} = 12$.
The hatched band marks the region below five runs per method, in which we do not rely on the formula; values falling there are raised to five (Remark~\ref{rem:powervalidity}).
The shaded band is the approximate $95\%$ interval for the case-study curve induced by the scaled-$\chi^2$ (Satterthwaite) uncertainty of the plug-in $s^2$ ($\hat\nu \approx 1.2$, Section~\ref{sec:case}); at $\delta = 0.01$ it spans $4$ to $3542$ runs per method}
\label{fig:seedsneeded}
\end{figure}

\begin{remark}[domain of validity]
\label{rem:powervalidity}
In the actual test $\widehat{\mathrm{SE}}$ is random, while Proposition~\ref{prop:power} treats it as known.
In the design-relevant range, approximate power at least $0.2$ and $n \ge 5$, the approximation is nevertheless accurate: in the simulations of Section~\ref{sec:sim} its absolute error does not exceed $0.025$ (median $0.006$ over the grid), and it nowhere overstates power by more than $0.02$, within Monte-Carlo resolution; under the real seed law of Section~\ref{sec:case2} the overstatement reaches $0.03$ (Section~\ref{sec:case2:formula}).
Outside it the approximation degrades where the deterministic power is small, that is, when $\delta$ is close to $z_{1-\alpha} \mathrm{SE}(n)$ and rejections come mainly from realisations that underestimate $\widehat{\mathrm{SE}}$, and it errs on the conservative side, by up to $0.07$ in near-zero-power cells at $n \ge 5$.
The comparison above separates these empirical errors into three parts: the shortfall of the heuristic against the exact pooled-model power, which is conservative at every cell with $|\Delta| \le \delta$; the lower power of the Welch test against the pooled one on normal run means, up to $0.027$ at $n = 5$ in the design-relevant range; and the departure of the realised two-level AUROC process from normal run means under the Welch test, which stays within Monte-Carlo resolution at all $108$ cells inside the margin (at most $0.020$ and median $0.0045$ in the design-relevant range; a two-proportion test of each realised rate against its Welch-normal value gives $|z| \le 2.6$ over the $108$ cells).
At $n = 2$ ($\nu = 2$) it predicts $\approx 0$ where the realised rate at $\Delta = 0$ reaches $0.18$.
At $n = 3$ the error runs in both directions, from $-0.17$ to $+0.05$ across the grid, so below $n = 5$ we do not rely on the formula at all.
The realised size at both boundaries $\Delta = \pm\delta$ stays at or near $\alpha$, consistent with the intersection--union argument.
The largest value over the $108$ boundary cells is $0.063$ (Monte-Carlo standard error $0.005$), an excursion of the size expected for the maximum over that many cells and no evidence of systematic inflation.
Both patterns persist when the Gaussian seed effects are replaced by skewed or heavy-tailed laws (Section~\ref{sec:sim}).
Design guidance based on~\eqref{eq:n} thus errs on the safe side where it is imprecise; recommendations that land below five runs should be raised to five.
\end{remark}

Fig.~\ref{fig:seedsneeded} turns Corollary~\ref{cor:n} into design curves: the number of runs required for $80\%$ power against margin $\delta$, for representative seed-level dispersions.
The steepness of $n^{*}(\delta) \propto \delta^{-2}$ near small margins is the central practical message: certifying a tight equivalence margin is expensive in training runs, and once the image-level term is subordinate to the seed-level one, no test-set enlargement can remove the seed-level floor (Proposition~\ref{prop:variance}).

\section{Simulation study}
\label{sec:sim}

Both theoretical claims, the power approximation of Proposition~\ref{prop:power} and the coverage behaviour of the two-level bootstrap (Remark~\ref{rem:bootvalidity}), are checked against full simulations of the two-level model.
Scores are generated from the binormal model: within run $(g,s)$, ID scores are $\mathcal{N}(0, 1)$ and OOD scores $\mathcal{N}(m_{g,s}, 1)$, where $m_{g,s} = \sqrt{2}\, \Phi^{-1}(\theta_{g,s})$ links the location shift to the run-level AUROC, and $\theta_{g,s}$ follows model~\eqref{eq:model} with Gaussian seed effects.
The true difference is assigned to method $A$ ($\mu_A = \theta + \Delta$, $\mu_B = \theta$); on a shared test sample each score mixes a common item effect and run noise with weights $\sqrt{\rho}$ and $\sqrt{1 - \rho}$, which fixes the cross-run correlation of same-item scores at $\rho$ without changing the marginal laws; the root seeds of all grids are fixed in the simulation scripts.
Uncertainties quoted as $\pm$ in the text are single Monte-Carlo standard errors; the figures show $\pm 3$ standard errors.
The image-level variances implied by this generator are well approximated by the classical formula of \citet{HanleyMcNeil1982}, which rests on an exponential score model: it agrees with the binormal value to within about $2\%$ at $\theta \le 0.75$ but exceeds it by $13\%$ at $\theta = 0.90$; at $m_0 = m_1 = 10^4$ the binormal values are $\bar v = 1.65 \cdot 10^{-5}$, $1.16 \cdot 10^{-5}$ and $4.5 \cdot 10^{-6}$ for $\theta = 0.55$, $0.75$ and $0.90$.
Draws of $\theta_{g,s}$ outside the domain of the link are rejected and redrawn, which keeps the generator within the support constraint of model~\eqref{eq:model}; under the Gaussian seed effects and the grid parameters below the rejection probability is under $10^{-15}$ per draw, no rejection occurred in the realised grids, and the results therefore coincide with unrestricted sampling.

\emph{Power validation.}
The grid crosses $\sigma \in \{0.002, 0.005, 0.01\}$, $n \in \{2, 3, 5, 8, 12, 20\}$, $\theta \in \{0.55, 0.75, 0.90\}$ and true $\Delta \in \{-0.01, 0, 0.005, 0.01, 0.02\}$ at margin $\delta = 0.01$ and $\alpha = 0.05$, so that the values $\Delta = \pm\delta$ sit on the two boundaries of the composite null, with $2000$ replications per cell and an independent test draw per run (the regime of Proposition~\ref{prop:variance}).
Fig.~\ref{fig:power} compares the empirical rejection rate with the shifted-$t$ approximation; the agreement pattern (accurate in the design-relevant range, conservative in the near-zero-power regime and at $n = 2$, erring in both directions at $n = 3$, size at the boundaries consistent with $\alpha$) is quantified in Remark~\ref{rem:powervalidity}.
The validation is run under independent test draws; under a shared test set the per-run variance changes as in Proposition~\ref{prop:shared}, a change that the plug-in $s^2$ of Section~\ref{sec:case} absorbs, while the shape of the conditional law lies outside the scope of that proposition and is not validated separately here.

\emph{Robustness to the seed-effect law.}
A companion grid replaces the Gaussian seed effects by a centred exponential law (skewness $2$) and by a scaled $t_3$ law (heavy tails), crossing the three laws, a Gaussian anchor included, with $\sigma \in \{0.005, 0.01\}$, $n \in \{2, 3, 5, 12\}$ and $\Delta \in \{-0.01, 0, 0.005, 0.01\}$ at $\theta = 0.75$ ($96$ cells, $2000$ replications each).
Since the exponential and $t_3$ laws are unbounded, draws of $\theta_{g,s}$ are clipped to $(0.505, 0.999)$, so $\sigma$ denotes the pre-clipping scale.
The clip probability peaks at $3 \cdot 10^{-5}$ per draw ($t_3$ at $\sigma = 0.01$), yet because a sizeable part of the variance of $t_3$ resides in its far tails, clipping lowers that variance by $3.4\%$ at $\sigma = 0.01$ and by $1.7\%$ at $\sigma = 0.005$; the exponential law is unaffected at the reported precision.
This is a shift of under $2\%$ in the standard deviation and of under $0.03$ in the predicted power in the affected cells, small against the deviations reported below, and it acts in the conservative direction for the formula, which uses the nominal $\sigma$.
The conclusions transfer: the realised size at the boundaries $\Delta = \pm\delta$ never exceeds $0.053$ under any law, and in the design-relevant range the formula overstates power nowhere beyond Monte-Carlo resolution, its errors growing only in the conservative direction, with understatements of up to $0.05$ under skewness and up to $0.125$ under heavy tails.
On the exponential and $t_3$ grids, design guidance based on the formula therefore errs further on the safe side; the real seed law of Section~\ref{sec:case2:formula} shows that this need not hold for every non-Gaussian law.

\begin{figure}[t]
\centering
\includegraphics[width=119mm]{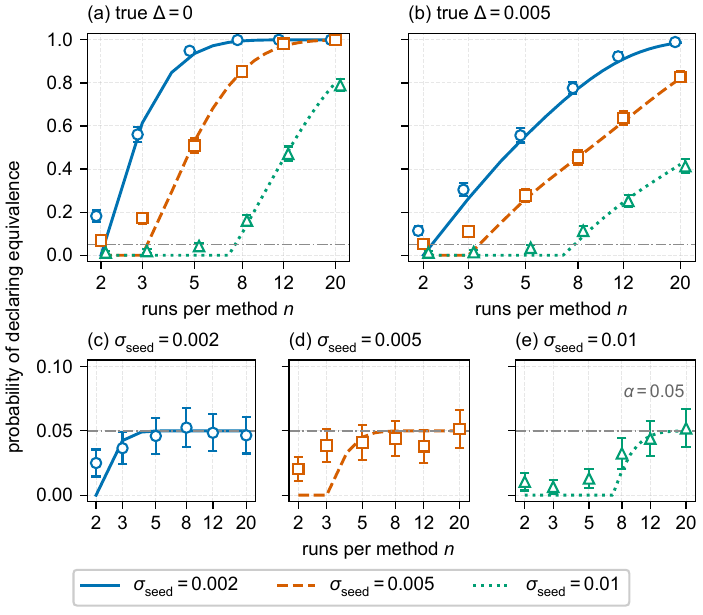}
\caption{Empirical probability that TOST declares equivalence (markers, $\pm 3$ Monte-Carlo standard errors, $2000$ replications per point) against the shifted-$t$ approximation~\eqref{eq:powert} (lines), at $\theta = 0.75$ and margin $\delta = 0.01$.
Panels (a) and (b) show the power at true differences $\Delta = 0$ and $\Delta = 0.005$; panels (c)--(e) show the realised size at the boundary $\Delta = \delta$ for the three seed-level dispersions $\sigma_{\mathrm{seed}} = 0.002$, $0.005$ and $0.01$, one panel each, on an enlarged vertical scale ($0$ to $0.1$).
Line style and marker encode $\sigma_{\mathrm{seed}}$ as in Fig.~\ref{fig:seedsneeded}.
The horizontal axes are logarithmic, and in panels (a) and (b) the markers of the three series are offset slightly so that their error bars stay apart.
The approximation is accurate where the power is practically useful, conservative in the near-zero-power regime, most visibly at $n = 2$, and misses in both directions at $n = 3$ (Remark~\ref{rem:powervalidity}).
The dash-dotted lines mark $\alpha = 0.05$}
\label{fig:power}
\end{figure}

\emph{Interval coverage.}
All runs share one test sample; the baseline configuration has between-run score correlation $\rho = 0.8$, calibrated to the case-study data ($0.79$--$0.92$ across test sets), true difference zero, $\theta = 0.6$, $m_0 = m_1 = 2000$, $B = 400$ bootstrap iterations, $1000$ replications per cell.
The comparison baseline is the naive per-image bootstrap that resamples items only, with all runs fixed: the common practice when a single run per method is available, and a valid interval for the different question of comparing two \emph{fixed} trained models rather than the training methods behind them.
The grid maps the $(n, \sigma)$ plane at $n \in \{2, 3, 5, 8, 12\}$ and $\sigma \in \{0.002, 0.005, 0.01\}$ and probes three further axes ($\rho \in \{0.5, 0.95\}$, a non-zero true difference $\Delta = 0.005$, and $\theta = 0.75$), for $31$ cells in total.
Fig.~\ref{fig:coverage} shows the $(n, \sigma)$ map.
The naive interval collapses whenever the seed level matters and never recovers: at $\sigma = 0.01$ its coverage stays near $0.53$ for every $n$, its width does not react to $\sigma$ at all, and averaging more runs only makes it narrower.
The two-level interval reproduces the three regimes of Remark~\ref{rem:bootvalidity}: conservative under a dominant residual image level ($0.98$--$1.00$ at $\sigma = 0.002$, growing with $n$), close to nominal or mildly conservative when the two variance levels are balanced ($0.96$--$0.99$ at $\sigma = 0.005$), and anti-conservative with few runs and a dominant seed level ($0.835 \pm 0.012$ at $\sigma = 0.01$, $n = 2$), recovering to $0.94$--$0.95$ by $n = 5$--$8$.
The probe axes confirm that the governing quantity is the share of the run-specific residual: raising the score correlation to $\rho = 0.95$ deepens the small-$n$ deficit ($0.725 \pm 0.014$ at $n = 2$, $\sigma = 0.01$, and still $0.897 \pm 0.010$ at $n = 5$), lowering it to $\rho = 0.5$ attenuates it ($0.912 \pm 0.009$ in the same cell), while a non-zero true difference or $\theta = 0.75$ shifts coverage at $n = 2$ by at most about $0.04$ and leaves $n = 5$ essentially unchanged.

\begin{figure}[t]
\centering
\includegraphics[width=119mm]{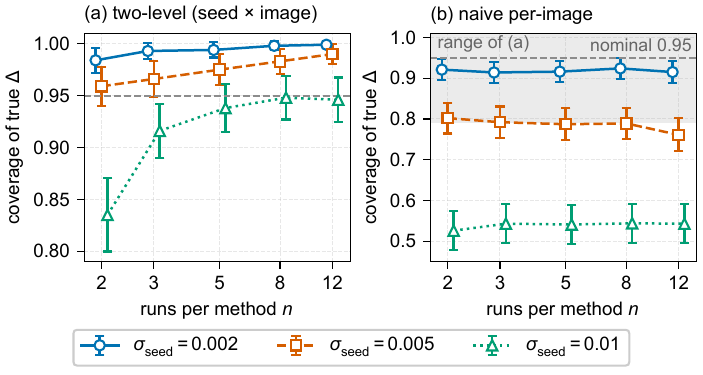}
\caption{Coverage of the true difference $\Delta = 0$ by nominal $95\%$ intervals on a shared test set ($\theta = 0.6$, $m_0 = m_1 = 2000$, $\rho = 0.8$, $B = 400$, $1000$ replications per point, $\pm 3$ Monte-Carlo standard errors) as a function of the number of runs: (a) the two-level bootstrap of Section~\ref{sec:model:boot} against (b) the naive per-image bootstrap that ignores the seed level; panel (a) enlarges the band shaded in panel (b), the dashed horizontal line marks the nominal level $0.95$, the horizontal axis is logarithmic, and the markers are offset slightly so that their error bars stay apart.
Line style and marker encode $\sigma_{\mathrm{seed}}$ as in Fig.~\ref{fig:seedsneeded}.
The naive interval is far below nominal whenever the seed level matters and does not improve with more runs; the two-level interval departs from nominal only in the regimes of Remark~\ref{rem:bootvalidity}}
\label{fig:coverage}
\end{figure}

\section{Case study: comparing two training methods for joint energy-based models}
\label{sec:case}

The protocol is now applied to the data of our experimental study of joint energy-based models \citep{Knopov2026jem}, which compared two training methods on CIFAR-10: the standard stochastic gradient Langevin dynamics (SGLD) sampler and a fixed-noise adaptation of the predictor--corrector (PC) sampler.
In the notation of Section~\ref{sec:model}, $A = \mathrm{PC}$ and $B = \mathrm{SGLD}$, so $\Delta = \mu_{\mathrm{PC}} - \mu_{\mathrm{SGLD}}$.
Each method was trained twice with different random seeds ($n = 2$).
Every run was evaluated at the checkpoint prescribed by the selection rule of the source study, about ten epochs before that run's own divergence point; the run unit therefore comprises training, this checkpoint rule and evaluation, and the design counts below refer to runs evaluated under that rule.
Every run was scored on the shared CIFAR-10 test sample ($m_0 = 10\,000$ ID items) against five OOD test sets: SVHN ($m_1 = 26\,032$), CIFAR-100 ($10\,000$), DTD ($1\,880$), LSUN-R ($10\,000$) and iSUN ($8\,925$).
The per-run summary is the macro-average of the five AUROCs, and the equivalence margin adopted there is $\delta = 0.01$.
The margin is a domain choice (Section~\ref{sec:discussion}), and here it prices the decision at stake: the adapted sampler adds implementation and tuning complexity over SGLD, and a quality gain below one hundredth of AUROC was judged too small to justify adopting it; this is a judgement about practical consequences, not about detectability, since with enough runs far smaller differences become estimable.
The window is symmetric because the claim under test is interchangeability, in the expected macro-average of the five AUROCs under the evaluation protocol of the source study, rather than a one-directional switch: an advantage beyond the margin would justify the extra complexity, while a deficit beyond it would disqualify the sampler as a drop-in replacement.
The case study is retrospective, since in \citet{Knopov2026jem} the margin was adopted at analysis time rather than pre-registered, so we read it as an illustration of the protocol, whose prospective form fixes $\delta$ before data collection.
Table~\ref{tab:runs} lists the per-run values; all numbers below are recomputed from the raw per-item scores with our implementation and reproduce the published ones exactly.
Table~\ref{tab:design} shows how the certification cost responds to tightening or relaxing the margin.
The per-item scores of all four runs are archived on Zenodo with the source study \citep{Knopov2026jem}, \url{https://doi.org/10.5281/zenodo.21872469}.

\begin{table}[t]
\caption{Per-run AUROCs of the case study: two training methods, two seeds each, evaluated at the checkpoints prescribed by the source study, five OOD test sets and their macro-average.
Recomputed from the raw per-item scores of \citet{Knopov2026jem}}
\label{tab:runs}
\centering
\begin{tabular}{lcccccc}
\hline
Run & SVHN & CIFAR-100 & DTD & LSUN-R & iSUN & macro-5 \\
\hline
SGLD, seed 42  & $0.5676$ & $0.6396$ & $0.7250$ & $0.7805$ & $0.7863$ & $0.6998$ \\
SGLD, seed 123 & $0.6056$ & $0.6244$ & $0.7349$ & $0.7688$ & $0.7899$ & $0.7047$ \\
PC, seed 42    & $0.6045$ & $0.6418$ & $0.7374$ & $0.7511$ & $0.7668$ & $0.7003$ \\
PC, seed 123   & $0.6237$ & $0.6391$ & $0.7549$ & $0.7737$ & $0.7859$ & $0.7155$ \\
\hline
\end{tabular}
\end{table}

\emph{Estimates.}
The method means are $0.7022$ (SGLD) and $0.7079$ (PC); the seed-level dispersions are $s_{\mathrm{SGLD}} = 0.0035$ and $s_{\mathrm{PC}} = 0.0107$ (one degree of freedom each), pooling to $s^2 = 6.34 \cdot 10^{-5}$, i.e.\ $s = 0.0080$.
The image-level (DeLong) variance of the macro-average, per-set components plus the cross-set covariance induced by the shared ID sample, is $\bar v = 8.6 \cdot 10^{-6}$ per run, essentially identical across the four runs and only $14\%$ of $s^2$ on the plug-in estimates: this suggests a seed-dominated comparison even before any cancellation, though the ratio inherits the uncertainty of $s^2$ itself.
The between-run correlation of per-item scores ($0.79$--$0.92$) indicates that much of that image structure is in fact shared, though it does not by itself identify the AUROC-level split (Remark~\ref{rem:estimand}).
The split is immaterial for the test: by Proposition~\ref{prop:shared} the pooled $s^2$ estimates the whole conditional per-run variance on this benchmark, whatever the share of residual image noise inside it, and it is used below as the plug-in.
The two methods reuse the seed values $42$ and $123$; the source study analysed the runs as unpaired, and we reproduce that working analysis here (the seeds were not designed as matched blocks, which by itself does not establish independence), with the paired TOST as a sensitivity check: it gives $p = 0.28$ and the same conclusion (incidentally, the between-method score correlation is the same for same-seed pairs, $0.878$, as for different-seed ones, $0.876$).
The collections of Section~\ref{sec:case2:pairs}, by contrast, are matched by seed by construction and are analysed there with the paired TOST as the primary test.

\emph{Equivalence test.}
The seed-level Welch-TOST gives $\hat\Delta = +0.0056$ (PC minus SGLD), $\widehat{\mathrm{SE}} = 0.0080$, $\hat\nu = 1.21$ and $p = 0.334$: equivalence within $\pm 0.01$ cannot be declared.
The $90\%$ interval of Proposition~\ref{prop:duality} is $[-0.032, +0.043]$, whose width is almost four times the full margin window.
The two-level bootstrap interval ($95\%$, $B = 3000$) is $[-0.005, +0.016]$; at $n = 2$ we treat it as illustrative, the Welch interval being the operative one (Section~\ref{sec:discussion}).
By the coverage study (Fig.~\ref{fig:coverage}), an interval obtained in the $\sigma = 0.01$, $n = 2$ regime is materially too narrow: its coverage in the simulated configurations is $0.84$ at $\rho = 0.8$ and $0.73$ at $\rho = 0.95$, and the correlation in these data lies between the two.
A corrected, wider interval would still contain zero and still cross the margin, so the descriptive reading of the data is unchanged: they are compatible both with equivalence and with practically relevant differences in either direction, and certify neither.

\emph{Retrospective power.}
Was the null outcome bad luck or design?
Corollary~\ref{cor:n} answers clearly, though conditionally on the plug-in $s^2$: at $n = 2$ the rejection region of the deterministic approximation is empty, since $\delta = 0.01 < t_{0.95,\,2}\, \mathrm{SE}(2) = 0.023$, and the shifted-$t$ power at $\Delta = 0$ is zero.
The exact pooled-model power of Section~\ref{sec:power} is small but positive at $n = 2$, $0.054$ at $\Delta = 0$ and $0.044$ at $\Delta = \hat\Delta$, because a pair of runs with an unusually small $s^2$ can still certify.
A simulation of the two-level model calibrated to the estimates above ($\theta = 0.705$, $\rho = 0.89$, $\sigma = 0.0079$, $2000$ replications; a single binormal AUROC stands in for the five-set macro-average) realises a rejection rate of $0.026 \pm 0.004$, the conservative regime of Remark~\ref{rem:powervalidity}.
This rate is itself a plug-in quantity: the variance calibration behind it rests on $\hat\nu \approx 1.2$ effective degrees of freedom, so the number is a model-based estimate rather than a property of the design alone.
The deterministic emptiness is conditional on the plug-in as well, but far more stable: at $n = 2$ the region is empty whenever $s^2 > (\delta / t_{0.95,\,2})^2 = 1.2 \cdot 10^{-5}$, which holds across the entire $\chi^2$ interval for the variance reported below.
Under that calibration, whatever the true difference, the certification probability at $n = 2$ stays in the low per~cents, with $\Delta = 0$ the most favourable case, so the failure of the TOST to declare equivalence was determined largely by the design rather than by the data.

\emph{The design answer.}
For $80\%$ power at $\delta = 0.01$ the design formula requires $n^{*} = 12$ runs per method ($15$ for $90\%$), marked by the star in Fig.~\ref{fig:seedsneeded}; at $n = 12$ the calibrated simulation realises a power of $0.811 \pm 0.009$ against the predicted $0.812$ (both the formula and this simulation assume a common per-run variance).
\citet{Knopov2026jem} quotes these counts in its limitations section as order-of-magnitude guidance; their derivation, simulation check and sensitivities are the subject of the present section.
Both numbers also assume exact equality, the most favourable case for the test: if the true difference instead equals the observed $\hat\Delta = 0.0056$, the power of the $n = 12$ design drops to $0.36$, and $80\%$ power at the same margin needs about $42$ runs per method.
A second sensitivity concerns the dispersions, which the two methods appear to violate ($s_{\mathrm{PC}}^2 / s_{\mathrm{SGLD}}^2 \approx 9.5$): if that ratio persisted, the anticipated Welch degrees of freedom of Section~\ref{sec:power} at $n = 12$ would be $13.3$ rather than $22$, the realised power $0.79$ rather than $0.81$, and $n^{*}$ would move from $12$ to $13$, within the order-of-magnitude reading of these numbers.
The ratio itself, however, rests on one degree of freedom per method and is far too uncertain to distinguish equal from unequal true dispersions (a two-sided $F(1,1)$ test gives $p \approx 0.4$).
Table~\ref{tab:design} tabulates $n^{*}$ over margins at $\Delta = 0$: relaxing the margin to $0.02$ brings the cost down to the floor of five runs per method (raw formula values $4$--$5$), while certifying $0.005$ would take $45$.
The exact pooled-model calculation of Section~\ref{sec:power} returns the same counts in every cell except $n^{*}(90\%) = 7$ rather than $8$ at $\delta = 0.015$.

\begin{table}[t]
\caption{Runs per method $n^{*}$ at $\alpha = 0.05$ and true $\Delta = 0$ with the case-study plug-in $s = 0.0080$, from the shifted-$t$ design formula of Corollary~\ref{cor:n} (formula) and from the exact pooled-model power of \citet{Shieh2016} inverted for $n$ (exact).
Entries below $n = 5$ lie outside the simulation-validated range of the formula (Remark~\ref{rem:powervalidity}) and should be read as ``at least five''}
\label{tab:design}
\centering
\begin{tabular*}{\textwidth}{@{\extracolsep{\fill}}ccccc}
\hline
& \multicolumn{2}{c}{$n^{*}$ ($80\%$)} & \multicolumn{2}{c}{$n^{*}$ ($90\%$)} \\
$\delta$ & formula & exact & formula & exact \\
\hline
$0.005$  & $45$ & $45$ & $56$ & $56$ \\
$0.0075$ & $21$ & $21$ & $26$ & $26$ \\
$0.01$   & $12$ & $12$ & $15$ & $15$ \\
$0.015$  & $6$  & $6$  & $8$  & $7$  \\
$0.02$   & $4$  & $4$  & $5$  & $5$  \\
\hline
\end{tabular*}
\end{table}

One caveat is essential for honest use of these numbers.
With two runs per method the pooled $s^2$ itself rests on $\hat\nu \approx 1.2$ effective degrees of freedom: the approximate $\chi^2$-based $95\%$ interval for the true per-run variance, with the pooled $s^2$ treated as a Satterthwaite scaled-$\chi^2$, stretches from $1.4 \cdot 10^{-5}$ to $2.1 \cdot 10^{-2}$, and the corresponding $n^{*}(80\%)$ ranges from $4$ into the thousands (the shaded band of Fig.~\ref{fig:seedsneeded}).
We have not verified the coverage of that Satterthwaite approximation; under normal run means with a common variance the exact two-degree-of-freedom interval is narrower, $1.7 \cdot 10^{-5}$ to $2.5 \cdot 10^{-3}$, and still spans $n^{*}$ from $4$ to about $430$.
The plug-in $n^{*} = 12$ is an order-of-magnitude statement, not a certified sample size.
The practical recommendation for a certification experiment is therefore two-stage: a small pilot ($4$--$6$ runs per method) to sharpen the estimate of $s$ and to bound the plausible true difference, then Corollary~\ref{cor:n} with the updated values.
The pilot sizes the remaining uncertainty itself: at $5$ runs per method the pooled $s^2$ carries $8$ degrees of freedom under a normal, common-variance working model, and the multiplicative $95\%$ uncertainty of $n^{*}$ shrinks from $\times 0.2$--$\times 330$ to $\times 0.5$--$\times 3.7$.
The simplest confirmatory implementation uses the pilot for design only and excludes its runs from the final test; reusing them would make the run count depend on the observed data, an internal-pilot design whose size control must be established separately.
Section~\ref{sec:case2:pairs} carries this procedure through on a real seed law; there, on one collection and two margins, entering the pilot variance as an upper confidence bound rather than as a point estimate, following \citet{Browne1995}, brought the achieved power close to the target, at the price of more runs.

\section{Second case study: real seed laws in two public collections}
\label{sec:case2}

The case study of Section~\ref{sec:case} has the run count that motivated this work and no more; it cannot show how the protocol behaves when runs are plentiful, nor whether real seed effects resemble the laws simulated in Section~\ref{sec:sim}.
Two public collections of independently trained models allow both questions to be asked on real data at no training cost.
The first is the CIFAR-10 seed zoo of \citet{Schuerholt2022}: one small convolutional network in one training configuration, trained from $1000$ random seeds, with a checkpoint saved after every epoch \citep{ModelZoosData2022}; the companion fixed-seed hyperparameter zoo covers $128$ configurations with $10$ seeds each.
The second is the MultiBERTs release of \citet{Sellam2022}: $25$ independently pretrained BERT-base models, fine-tuned with several learning rates and fine-tuning seeds, together with their per-example predictions on the development sets of eight tasks of the General Language Understanding Evaluation benchmark (GLUE), class probabilities for the seven classification tasks, released alongside the checkpoints rather than described in the paper.
We use three of them, chosen to span test-set sizes from a few hundred to ten thousand items: RTE (Recognizing Textual Entailment, $m = 277$ examples), SST-2 (Stanford Sentiment Treebank, $872$) and MNLI (Multi-Genre Natural Language Inference, matched development set, $9\,815$).

Both collections record classification outputs, so the metric is accuracy rather than AUROC.
The seed-level layer of Section~\ref{sec:model} applies unchanged; at the item level the accuracy of a run is a mean of per-item correctness indicators, its variance conditional on the run is $p(1 - p)/m$, and the between-run correlation of correctness plays the role of $\rho$.
For the zoo we recomputed the per-image predictions of the final checkpoint of every run on the CIFAR-10 test set; the resulting accuracies match the values shipped in the release exactly for $999$ of the $1000$ runs and differ by one image in the remaining one.
A run of the zoo is one seed; a run of MultiBERTs is one pretraining seed with one fixed fine-tuning seed, so that the $25$ runs of a configuration are independent and free of the clustering that the $125$-run collection with five fine-tuning seeds per pretraining seed would introduce.
Fixing the fine-tuning seed is empirically benign for the per-run variance at this precision: the per-run variance at the fixed seed lies within $20\%$ of the total variance of the $125$-run collection on all three tasks (Section~\ref{sec:case2:scope}).
All numbers below are produced by the scripts released with this article.

\subsection{Two-level structure}
\label{sec:case2:levels}

Table~\ref{tab:case2levels} summarises the four single-configuration collections.
Its $n^{*}$ column uses the shifted-$t$ formula; the exact pooled-model count of Section~\ref{sec:power} is one lower for RTE ($635$) and MNLI ($2$) and identical for the other two.
Three observations matter for design.
First, real seed laws are neither Gaussian nor alike: the zoo's accuracies are strongly left-skewed with heavy tails, with a minority of seeds at markedly worse solutions (the lowest decile lies $0.02$ below the median and the minimum $0.08$ below it), whereas the three GLUE tasks are at most moderately skewed without marked excess kurtosis; at $25$ runs their shape statistics are compatible with a Gaussian law (Shapiro--Wilk $p \ge 0.13$), in contrast with the zoo's $1000$ runs.
Second, the seed-level dispersion spans more than an order of magnitude across tasks trained with the same recipe: $0.002$ on MNLI, $0.005$ on SST-2 and $0.061$ on RTE, where fine-tuning is unstable \citep{Dodge2020}.
For RTE this puts a margin of one accuracy point out of reach of any realistic budget ($n^{*} = 636$ runs per method), and even a margin of five points needs $27$ runs per method; on such a benchmark equivalence is certifiable only at margins comparable to its own seed dispersion.
Third, the binomial item-level variance exceeds the observed per-run variance on SST-2 and MNLI ($\bar v / s^2 = 3.6$ and $2.9$).
This does not contradict Proposition~\ref{prop:variance}; it is the shared-test-set case of Proposition~\ref{prop:shared}.
The quantity $\bar v$ is the variance over redraws of the test items, which a fixed benchmark never realises, whereas $s^2$ is the conditional per-run variance.
When the runs agree on most items ($82\%$ of the SST-2 items are classified correctly by all $25$ runs, $\hat\rho = 0.70$), their accuracies differ by less than binomial noise would suggest.
The design consequence is concrete: sizing a shared-benchmark comparison with $s^2 + \bar v$ in place of $s^2$ would ask for $20$ runs per method on SST-2 instead of $5$; the plug-in of Section~\ref{sec:power} on a shared test set is $s^2$ alone, and the sizing and the certification then refer to the conditional difference $\Delta(T)$ on that benchmark (Proposition~\ref{prop:shared}).

\begin{table}[t]
\caption{Two-level structure of four single-configuration collections: runs $R$, test items $m$, mean accuracy, seed-level standard deviation $s$, skewness and excess kurtosis of the per-run accuracies, ratio of the mean binomial item-level variance $\bar v$ to $s^2$, between-run correlation of per-item correctness $\hat\rho$, share of items classified correctly by every run (column \emph{always}), and runs per method $n^{*}$ for $80\%$ power at $\delta = 0.01$ from the design formula with the plug-in $s$ (shared test set); values of $n^{*}$ below $5$ are to be read as ``at least five'' (Remark~\ref{rem:powervalidity})}
\label{tab:case2levels}
\centering\footnotesize\setlength{\tabcolsep}{3.5pt}
\begin{tabular}{lrrrrrrrrrr}
\hline
Collection & $R$ & $m$ & mean & $s$ & skew & kurt & $\bar v/s^2$ & $\hat\rho$ & always & $n^{*}$ \\
\hline
CIFAR-10 zoo      & 1000 & 10\,000 & 0.487 & 0.0136 & $-1.80$ & $5.40$ & 0.14 & 0.57 & 0.04 & 33 \\
MultiBERTs, RTE   & 25   & 277     & 0.628 & 0.0609 & $-0.83$ & $0.24$ & 0.22 & 0.39 & 0.00 & 636 \\
MultiBERTs, SST-2 & 25   & 872     & 0.920 & 0.0049 & $-0.43$ & $0.14$ & 3.59 & 0.70 & 0.82 & 5 \\
MultiBERTs, MNLI  & 25   & 9\,815  & 0.843 & 0.0021 & $0.38$  & $-0.33$ & 2.94 & 0.66 & 0.64 & 3 \\
\hline
\end{tabular}
\end{table}

\subsection{The design formula under a real seed law}
\label{sec:case2:formula}

With $1000$ runs of one configuration, the certification probability of the seed-level test at a true difference of zero can be measured directly: $n$ runs are drawn without replacement for each of two disjoint arms, the TOST is applied, and the draw is repeated $5000$ times for each $n \in \{2, 3, 5, 8, 12, 20, 50\}$.
Adding $\pm\delta$ to one arm produces the boundaries of the null exactly, so the realised size is measured the same way.
Fig.~\ref{fig:realseed} shows the outcome at the margin $\delta = 0.01$ of Section~\ref{sec:case}, for which the formula with the plug-in $s = 0.0136$ asks for $n^{*} = 33$ runs, and at $\delta = 0.0173$, the margin at which it asks for $n^{*} = 12$; the same check on the $25$ SST-2 runs is added at $\delta = 0.01$ ($n^{*} = 5$).
Under the zoo's heavy-tailed law the pattern of Remark~\ref{rem:powervalidity} reappears with one qualification.
Below eight runs the formula is conservative, at times substantially ($0.12$ predicted against $0.38$ realised at $n = 5$, $\delta = 0.0173$): small arms rarely contain a poorly converged seed, so the realised standard error is typically smaller than the pooled $s$ implies.
At the tighter margin $\delta = 0.01$ the predicted power stays below $0.2$ up to $n = 12$, and the conservative gap persists that far ($0.07$ predicted against $0.24$ realised at $n = 12$).
Within the design-relevant range of Remark~\ref{rem:powervalidity}, where the approximate power is at least $0.2$, the error stays within $\pm 0.05$ and changes sign.
At the design point $n^{*} = 12$ for $\delta = 0.0173$ the realised rate is $0.794 \pm 0.006$ against the predicted $0.826$, the largest overstatement we observed, consistent with arms of that size catching an outlier often enough to inflate the Welch standard error.
At $n = 20$ and $50$ the formula and the data agree to within $0.03$, and the SST-2 series, whose law is close to Gaussian, agrees from $n = 3$ upward ($0.297$ realised against $0.286$ predicted at $n = 3$, $0.796$ against $0.801$ at $n = 5$).
The realised size stays at or near $\alpha$ throughout: the largest of the $28$ boundary cells of the zoo is $0.055 \pm 0.003$, an excursion of the size expected for a maximum over that many cells, so the size behaves under the real law as the intersection--union argument predicts, as it did in simulation.

\begin{figure}[t]
\centering
\includegraphics[width=119mm]{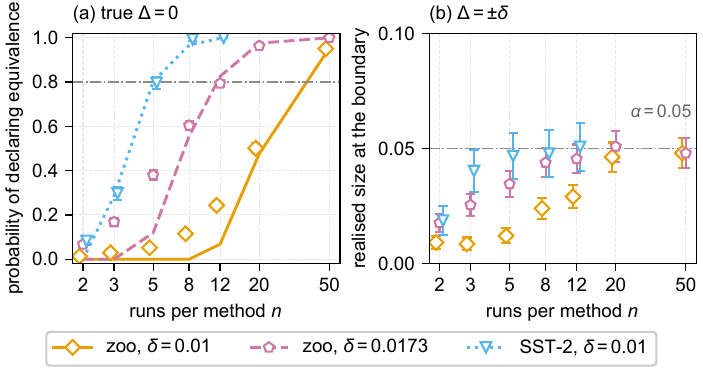}
\caption{The design formula against a real seed law: (a) probability that the seed-level TOST declares equivalence at true $\Delta = 0$ and (b) realised size at the boundaries $\Delta = \pm\delta$ when $n$ runs per method are drawn without replacement from the $1000$-seed CIFAR-10 zoo, at $\delta = 0.01$ ($n^{*} = 33$, solid line with diamonds) and $\delta = 0.0173$ ($n^{*} = 12$, dashed line with pentagons), and from the $25$ SST-2 runs of MultiBERTs at $\delta = 0.01$ ($n^{*} = 5$, dotted line with inverted triangles).
Markers are realised rates with $\pm 3$ Monte-Carlo standard errors (in panel (b) the mean of the two boundary rates), and lines in panel (a) show the shifted-$t$ formula~\eqref{eq:powert} with the plug-in $s$ of each collection; the thin dash-dotted lines mark the target power $0.8$ in panel (a) and the level $\alpha = 0.05$ in panel (b), the horizontal axis is logarithmic, and the markers are offset slightly so that their error bars stay apart}
\label{fig:realseed}
\end{figure}

\subsection{Two-method comparisons}
\label{sec:case2:pairs}

Table~\ref{tab:case2pairs} applies the full protocol to eight comparisons.
All eight are matched by seed, the zoo arms through the seeds $1$ to $10$ and the MultiBERTs arms through the pretraining seed, so the two arms are not independent as model~\eqref{eq:model} assumes; the table therefore reports the paired seed-level TOST, the one-sample TOST on the within-seed differences with $n - 1$ degrees of freedom and its $90\%$ interval, while the design count $n^{*}$ keeps the independent-arm design of a fresh experiment, the planning question of this paper.
That column is a floor: it assumes $\Delta = 0$, and at the observed differences the same formula asks for $116$ runs per method on SST-2 (learning rates) and $54$ on MNLI (steps), while for the RTE contrasts and the SST-2 step contrast, whose estimates lie at or beyond the margin, no run count certifies equivalence at $\delta = 0.01$.
The two zoo pairs, Adam against SGD and GELU against tanh at otherwise identical settings, are certified equivalent within $\delta = 0.01$ at $10$ matched pairs (paired TOST $p = 0.024$ and $0.021$; both $90\%$ intervals lie inside $\pm 0.009$).
These are the two one-factor contrasts of the hyperparameter zoo, optimiser and activation, with the remaining factors at the seed-zoo settings apart from the zoo-wide learning rate of $10^{-3}$; the two pairs share the Adam-GELU arm, so they are not independent comparisons, and both, together with the margins $0.01$ and $0.02$ carried over from Section~\ref{sec:case}, were fixed in advance of the analysis.

Run with the budgets of Section~\ref{sec:case}, the two zoo contrasts almost never certify equivalence: over all subsets of $n = 2$ matched pairs the paired TOST declares equivalence in $2$ to $4\%$ of them and for $n = 3$ in $3$ to $7\%$, in line with the retrospective power of the case study; at $n = 5$ the shares are $7\%$ and $9\%$.
The subsets are enumerated exactly (there are only $45$, $120$ and $252$ of them), so these shares describe the realised pool rather than fresh replications.

For the independent-arm planning question, a pilot of five runs per method followed by the design formula returns $n^{*} = 14$ and $10$ at the median over pilots, with $95\%$ pilot-to-pilot bands of $[5, 25]$ and $[5, 16]$, and $44$ to $47\%$ of the pilots return a value below the full-data one, so the point-estimate pilot rule of Section~\ref{sec:case} recovers the right order of magnitude relative to the full-pool plug-in reference, while a single pilot can still undersize the experiment.

Carried through end to end on the seed zoo, where $\Delta = 0$ by construction and $1000$ runs leave room for fresh confirmatory arms, the procedure falls short of its nominal power.
With a pilot of five runs per method, then the formula applied to the pooled pilot $s$ (prescriptions below five raised to five, which happens for $12\%$ of the pilots at the wider margin), then fresh runs of the prescribed size drawn without replacement from the remaining runs, equivalence is certified in a share $0.662 \pm 0.007$ of $5000$ replications at $\delta = 0.0173$ (for which the full-pool $s$ gives $n^{*} = 12$, and fresh arms of that fixed size certify in $0.794$) and in $0.598 \pm 0.007$ at $\delta = 0.01$ ($n^{*} = 33$, fixed-size rate $0.799$).
Two-thirds of the pilots undersize the experiment, with median prescriptions of $9$ and $24$ runs per method against $12$ and $33$; this share is not comparable with the $44$ to $47\%$ above, where the pilots were subsamples of the same ten runs that define the target.
Two mechanisms act: the pilot variance rests on eight degrees of freedom and lies below its expectation more often than above, which alone undersizes $0.54$ to $0.56$ of the pilots under a Gaussian law with the same $s$, and ten pilot runs contain none of the lowest $5\%$ of seeds $60\%$ of the time, so the heavy left tail of this seed law is usually missed as well.
Under the Gaussian law the shortfall persists at $0.718$ and $0.692$, so it is a property of sizing on a point estimate, sharpened by the tail.
Sizing on the upper $80\%$ confidence bound of the pooled pilot variance ($\chi^2$ on eight degrees of freedom) is the classical remedy for a variance taken from a pilot \citep{Browne1995,KieserWassmer1996}; here it is a normal-model inflation rule, since on this seed law the bound covers the full-pool variance in $61\%$ of the pilots rather than $80\%$.
It raises the rate to $0.802 \pm 0.006$ at $\delta = 0.0173$ and $0.779 \pm 0.006$ at $\delta = 0.01$, close to the fixed-size reference at the wider margin and short of it at the tighter one, with median prescriptions of $14$ and $39$ runs per method (median totals $38$ and $88$) and mean total costs, pilot included, of $50$ and $121$ runs against $34$ and $77$ for the point-estimate rule and $24$ and $66$ for a fixed-size design that knows the full-pool $s$.
Under the Gaussian law the same rule gives $0.894$ and $0.884$: the classical argument promises the intended power with probability at least $0.8$ over pilots rather than on average, and it presumes a valid variance bound and a calibrated power function, neither of which has been established for this seed law.

The MultiBERTs comparisons show the margin doing its work.
Between the learning rates $2 \cdot 10^{-5}$ and $5 \cdot 10^{-5}$, the two settings are certified equivalent on MNLI at both margins ($\hat\Delta = +0.0040$, interval $[+0.0028, +0.0052]$); on SST-2 the difference is distinguishable from zero and certified within $0.02$, while at $0.01$ the interval crosses the margin and equivalence is neither established nor excluded ($\hat\Delta = +0.0084$, interval $[+0.0067, +0.0101]$, a knife edge: the upper end exceeds the margin by $0.0001$); on RTE the two differ by ten accuracy points.
Between one and two million pretraining steps, at five runs per method, the second million buys about one point on SST-2 and MNLI ($\hat\Delta = -0.0101$ and $-0.0094$, intervals of width $0.0072$ and $0.0041$), certified within a margin of $0.02$, while at $0.01$ both intervals cross the margin, the SST-2 estimate just outside the margin and the MNLI estimate inside it, so equivalence at $0.01$ is neither established nor excluded; on RTE the five-pair interval is eleven points wide and settles nothing.

As a sensitivity check, the unpaired Welch-TOST returns $p = 0.010$ and $0.008$ for the zoo pairs (paired $0.024$ and $0.021$; same-seed correlations $-0.23$ and $-0.44$), $0.125$ for the learning-rate comparison on SST-2 (paired $0.059$; correlation $0.48$) and $0.53$, $0.23$ and $0.76$ for the step comparisons on SST-2, MNLI and RTE (paired $0.52$, $0.28$ and $0.85$; correlations $-0.43$, $-0.51$ and $+0.80$), with every verdict unchanged at both margins; ignoring a negative between-arm covariance understates the variance, and in the negatively correlated comparisons the unpaired intervals are indeed the narrower ones; on $10$ pairs the zoo correlations are not distinguishable from zero, so the pairing there is the analysis the design calls for rather than a finding.
A fresh experiment that trained matched pairs by design would size on the variance of the within-pair differences rather than on the independent-arm $n^{*}$ of the table; for the zoo contrasts this gives $18$ and $14$ pairs ($36$ and $28$ runs) against $14$ and $10$ runs per method ($28$ and $20$), so with the negative same-seed correlations observed here matching does not pay.

The eight comparisons at two margins are illustrative and carry no familywise control; as noted in Section~\ref{sec:discussion}, multiplicity inflates the rate of false certifications, so a study that needed separate verdicts on several contrasts would have to adjust for the multiplicity (one contrast tested at nested margins needs no adjustment, since the rejection regions are nested, and a single claim that all contrasts are equivalent is itself an intersection--union test).
Which of these verdicts is the useful one is a domain decision (Section~\ref{sec:discussion}); the protocol's contribution is to make the dependence on $\delta$ visible.

\begin{table}[t]
\caption{Eight two-method comparisons under the protocol, all matched by seed and analysed with the paired seed-level TOST on the within-seed differences: matched pairs $n$, estimated difference $\hat\Delta$ (first method minus second), paired $90\%$ interval, TOST verdict at $\delta = 0.01$ and $\delta = 0.02$, runs per method $n^{*}$ for $80\%$ power at $\delta = 0.01$ in a fresh experiment with independent arms, from the pooled $s$ of the two arms and computed under the counterfactual $\Delta = 0$ (the requirement grows without bound as the anticipated $|\Delta|$ approaches $\delta$, Section~\ref{sec:power}), and the share of the subsets of $2$, $3$ or $5$ matched pairs in which the paired TOST at $\delta = 0.01$ declares equivalence (exact enumeration over all $\binom{10}{n}$ or $\binom{25}{n}$ subsets; a subset whose differences coincide counts as a non-certification; not computed for the five-pair comparisons); the MultiBERTs rows use learning rates $2 \cdot 10^{-5}$ against $5 \cdot 10^{-5}$ at two million pretraining steps (rows \emph{lr}) and one against two million pretraining steps at $2 \cdot 10^{-5}$ (rows \emph{steps}); values of $n^{*}$ below $5$ are to be read as ``at least five'' (Remark~\ref{rem:powervalidity})}
\label{tab:case2pairs}
\centering\footnotesize\setlength{\tabcolsep}{3pt}
\begin{tabular}{llrrlccrrrr}
\hline
 & & & & & \multicolumn{2}{c}{TOST verdict} & & \multicolumn{3}{c}{certification rate} \\
Collection & Comparison & $n$ & $\hat\Delta$ & $90\%$ interval & $0.01$ & $0.02$ & $n^{*}$ & $n=2$ & $n=3$ & $n=5$ \\
\hline
zoo   & Adam vs SGD   & 10 & $-0.0002$ & $[-0.0080, +0.0077]$ & yes & yes & 14  & 0.04 & 0.03 & 0.07 \\
zoo   & GELU vs tanh  & 10 & $+0.0012$ & $[-0.0056, +0.0081]$ & yes & yes & 10  & 0.02 & 0.07 & 0.09 \\
RTE   & lr            & 25 & $+0.0983$ & $[+0.0737, +0.1229]$ & no  & no  & 397 & 0.00 & 0.00 & 0.00 \\
SST-2 & lr            & 25 & $+0.0084$ & $[+0.0067, +0.0101]$ & no  & yes & 6   & 0.04 & 0.04 & 0.08 \\
MNLI  & lr            & 25 & $+0.0040$ & $[+0.0028, +0.0052]$ & yes & yes & 3   & 0.28 & 0.67 & 0.99 \\
RTE   & steps         & 5  & $+0.0404$ & $[-0.0134, +0.0943]$ & no  & no  & 705 & \textendash & \textendash & \textendash \\
SST-2 & steps         & 5  & $-0.0101$ & $[-0.0137, -0.0065]$ & no  & yes & 3   & \textendash & \textendash & \textendash \\
MNLI  & steps         & 5  & $-0.0094$ & $[-0.0114, -0.0073]$ & no  & yes & 2   & \textendash & \textendash & \textendash \\
\hline
\end{tabular}
\end{table}

\subsection{Scope of this case study}
\label{sec:case2:scope}

The zoo networks are deliberately tiny (accuracy $0.49$ on CIFAR-10), a statistical testbed rather than models of practical interest, and the MultiBERTs configurations offer $25$ runs, so that in the single-configuration check of Section~\ref{sec:case2:formula} two disjoint arms of $n = 12$ use $24$ of them and successive draws differ only in how nearly the same runs are split.
More generally the subsampled rates condition on the observed run pool: the quoted Monte-Carlo standard errors measure the resampling error of that finite-pool rate, while the uncertainty about the seed law itself, which is estimated from only $25$ runs, is not captured by them and may dominate.
We therefore read the zoo for the shape of the seed law and the GLUE tasks for the regimes and the comparisons.
A nested decomposition of the $125$ MultiBERTs runs in which both seeds vary attributes more variance to the fine-tuning seed than to the pretraining seed on all three tasks (intraclass correlation of the pretraining seed $0.38$ on SST-2 and MNLI, and $0.13$ on RTE, where it is only marginally distinguishable from zero: $F = 1.73$ on $24$ and $100$ degrees of freedom, $p = 0.03$); the per-run variance at the fixed fine-tuning seed nevertheless lies within $20\%$ of the total variance of that collection on all three tasks, a check of the variance only, not of the method differences or of the tails.
This is why the run unit must include every stochastic stage that the compared methods re-run; a comparison that reuses one pretrained checkpoint estimates a different and narrower quantity.
Proposition~\ref{prop:shared} also asks the test collection to be independent of the runs; GLUE development sets are commonly used for checkpoint selection during fine-tuning, and the MultiBERTs release does not document whether this occurred, so that premise is plausible rather than guaranteed there.
Accuracies are discrete, so at $n = 2$ the two runs of an arm occasionally have identical accuracies in both arms, or the two paired differences coincide; the standard error then vanishes and the TOST statistic is undefined, in up to one draw in a hundred in the single-configuration check of Section~\ref{sec:case2:formula} and in up to six in a hundred of the subsets behind Table~\ref{tab:case2pairs} ($n = 2$ on SST-2, where $18$ of the $300$ pairs of within-seed differences coincide).
Such draws are counted as non-certifications.

\section{Discussion and limitations}
\label{sec:discussion}

\emph{Approximations in the power theory.}
Proposition~\ref{prop:power} treats $\hat\Delta$ as normal and $\widehat{\mathrm{SE}}$ as known.
The joint effect of both approximations is mapped rather than assumed: the formula is accurate in the design-relevant range and errs predominantly on the conservative side outside it, the exception being moderate-power cells at $n = 3$, while the realised size at the boundaries stays at or near $\alpha$ with no evidence of systematic inflation, under Gaussian and non-Gaussian seed effects alike (Remark~\ref{rem:powervalidity}, Fig.~\ref{fig:power}).
The real seed laws of Section~\ref{sec:case2} confirm the pattern with one qualification: under the heavy-tailed law of the zoo the error of the formula within the design-relevant range reached $0.05$ in the conservative direction and $0.03$ in the other, so the safety margin it offers is thinner than the Gaussian grid suggests, while the realised size stayed at or near $\alpha$, its largest boundary value of $0.055$ lying within Monte-Carlo resolution of the level.
For design use we regard the conservative direction as a feature, with one rule attached: recommendations that land below five runs are raised to five (Remark~\ref{rem:powervalidity}).

\emph{Bootstrap validity is regime-dependent.}
We claim no general consistency for the two-level scheme; its coverage is near-nominal only when the variance levels are balanced (Remark~\ref{rem:bootvalidity}, Fig.~\ref{fig:coverage}).
The map itself is finite: one binormal generator family, one test-sample size, $B = 400$ and equal method dispersions; the test-set-size and variance-ratio axes remain unexplored.
The anti-conservative regime, very few runs with dominant seed noise, is precisely where small-$n$ practice lives, so at $n \le 3$ we recommend the seed-level Welch interval and reserve the bootstrap for composite statistics at moderate $n$.
The Welch interval is approximate, since the unequal-variance problem admits no exact interval of the standard $t$ form, but it holds its level across the equal-variance designs simulated here.

\emph{Dependence between test sets.}
The per-set AUROCs of one run share the ID sample, so they are correlated; the case study accounts for this in $\bar v$ through the cross-set covariance of placement values.
Per-set TOSTs, however, remain dependent across OOD sets, and we make no multiplicity claim for testing several margins at once; the macro-average, treated as a single statistic, sidesteps the multiplicity by changing the estimand.
It does not certify the sets separately: in Table~\ref{tab:runs} the mean per-set differences reach $+0.028$ (SVHN) and $-0.012$ (LSUN-R), opposite in sign and both beyond the margin, so per-set equivalence claims would need margins and a multiplicity-aware procedure of their own.

\emph{Uncertainty of the plug-ins.}
At $n = 2$ the pooled $s^2$ carries roughly one effective degree of freedom, and the design answer inherits that uncertainty (Section~\ref{sec:case}): $n^{*}$ is then an order-of-magnitude statement.
The anticipated difference is a second, equally consequential plug-in: sizing at $\Delta = 0$ gives the smallest possible $n$, and the requirement grows quickly as the anticipated $|\Delta|$ approaches the margin (Section~\ref{sec:case}).
The two-stage remedy, a pilot of $4$--$6$ runs followed by the formula, is cheap relative to the certification experiment itself.
Carried through on the real seed law of Section~\ref{sec:case2} with the pilot $s$ plugged in as a point estimate, however, it delivers $0.60$ to $0.66$ against a nominal $0.80$, since most pilots undersize; sizing on the upper $80\%$ confidence bound of the pilot variance \citep{Browne1995,KieserWassmer1996} raises it to $0.78$ to $0.80$ at about one and a half times the run count of the point-estimate rule, roughly twice that of a fixed-size design with the full-pool variance (Section~\ref{sec:case2:pairs}); this rests on one collection, two margins and $\Delta = 0$, a planning sensitivity analysis rather than a general guarantee.
The methods may also differ in dispersion, as they appear to in the case study; Welch statistics absorb this in testing, and the unequal-variance design formula of Section~\ref{sec:power} prices it at one extra run on the case-study estimates (Section~\ref{sec:case}).

\emph{Beyond AUROC.}
The seed-level layer, that is, model~\eqref{eq:model}, the TOST and the power function, carries over to other per-run scalar metrics: accuracy, macro-$F_1$, calibration error.
Section~\ref{sec:case2} exercises it with accuracy, whose item-level variance is binomial, and makes the shared-test-set regime visible in which that binomial variance exceeds the observed per-run variance and must not be added to it.
The variance decomposition asks only for finite variances, but the Welch-TOST and the shifted-$t$ heuristic additionally lean on approximate normality of the run means, so a metric with a strongly skewed or heavy-tailed per-run law at small $n$ calls for a simulation check of its own before the design formula is trusted.
Finite variances alone do not protect against a multimodal run law either: if a small fraction of runs fails outright, so that the run law is a mixture with a rare large shift, five runs per method will often contain no failure at all, and equivalence can then be certified far more often than $\alpha$ allows even when the true difference lies at or beyond the margin.
The per-run values should be inspected for such failures, whose frequency is an estimand of its own, before the protocol is applied to their mean, bearing in mind that a short failure-free pilot cannot exclude a rare component.
Only the inner-level estimator is AUROC-specific; for metrics that average per-item contributions, $\bar v$ can be estimated from the empirical influence values, while metrics without such structure require a metric-specific inner estimator, with the outer layer unchanged.

\emph{Choice of the margin.}
The margin $\delta$ must be fixed before the experiment \citep{Lakens2017}; we treat it as a domain decision rather than a statistical one, the smallest difference that would change the practical decision, and not as the resource-based fallback of the smallest detectable difference that \citet{Lakens2017} allows when no substantive bound is available.
Reporting the $(1 - 2\alpha)$ interval next to $\delta$ (Proposition~\ref{prop:duality}) keeps the conclusion transparent under any reader's margin.

\section{Conclusions}
\label{sec:conclusions}

Certifying that two training methods are practically equivalent is a design problem before it is a testing problem.
The protocol proposed here (a margin fixed in advance, per-run AUROCs as the unit of analysis, a Welch-TOST with its dual confidence interval, and a two-level bootstrap for composite statistics) makes the certification claim precise, and the power theory prices it in training runs: $n \ge 2 (\sigma^2 + \bar v)(z_{1-\alpha} + z_{(1+\pi^{*})/2})^2 / \delta^2$.
Once the test set is large enough for the image-level term to be subordinate to the seed-level term, the regime of the case study, the price is dominated by the seed-level dispersion, which no test-set enlargement can reduce.
In the case study this reads: the published TOST of \citet{Knopov2026jem} at $n = 2$, which failed to declare equivalence, had, under the variance calibrated to its own runs, a realised certification probability of roughly three per~cent, and a certification of $\pm 0.01$ would take about $12$ runs per method even under exact equality; that statement could have been made after a small pilot, before the confirmatory experiment.
On two public collections with up to a thousand independently trained runs the same formula tracks the realised certification probability to within $0.05$ wherever its design-relevant range applies, under a heavy-tailed real seed law, keeps the realised size at or near $\alpha$, and resolves comparisons that two or three runs leave undecided.
The two-stage experiment also shows that a pilot variance plugged in as a point estimate undersizes the experiment, and that a normal-model upper confidence bound on it raised the achieved rates from $0.60$ and $0.66$ to $0.78$ and $0.80$ at about half again the run count of the point-estimate rule.
Natural continuations are multiplicity-aware equivalence over several test sets, sequential designs that spend runs adaptively, and inner-level estimators for metrics beyond AUROC.

\backmatter

\newpage
\section*{Declarations}
\vspace{6pt plus 1ex minus .2ex}

\bmhead{Funding}
The authors received no specific funding for this work.

\bmhead{Competing interests}
The authors declare no competing interests.

\bmhead{Ethics approval}
Not applicable.
This study involves no human participants and no animals; it uses simulated data and publicly available benchmark datasets only.

\bmhead{Consent to participate}
Not applicable.
No human participants were involved in this study.

\bmhead{Consent to publish}
Not applicable.
The manuscript contains no data, images or videos relating to an identifiable individual person.

\bmhead{Data availability}
The per-item out-of-distribution scores of the four case-study runs are archived on Zenodo with the source study \citep{Knopov2026jem}, \url{https://doi.org/10.5281/zenodo.21872469}.
Every simulation grid reported here is regenerated from the root seeds recorded in the result files shipped with the code repository below.
The Model Zoos checkpoints of Section~\ref{sec:case2} are available from Zenodo under the CC BY 4.0 licence \citep{ModelZoosData2022}, and the MultiBERTs per-example predictions from the authors' public release \citep{Sellam2022}, \url{https://storage.googleapis.com/multiberts/public/example-predictions/}, whose data files carry no stated licence (the accompanying repository is under Apache-2.0); the per-run accuracies derived from both are included with the code.
The third-party benchmark datasets involved (CIFAR-10, SVHN, CIFAR-100, DTD, LSUN-R, iSUN, GLUE) are publicly available.

\bmhead{Materials availability}
Not applicable.

\bmhead{Code availability}
The \texttt{equivtest} package, the simulation scripts, the scripts of both case studies and the figure builder are available at \url{https://github.com/DmytroKnopov/equivtest} (release v0.2.0, archived at \url{https://doi.org/10.5281/zenodo.23421510}); the test suite includes regression tests against the published values of \citet{Knopov2026jem}.
The package requires Python~$\geq$~3.10 with NumPy and SciPy; reproducing the checkpoint-based zoo analysis of Section~\ref{sec:case2} additionally requires PyTorch and \texttt{torchvision}, and the download steps of both collections use \texttt{requests}.

\bmhead{Author contributions}
Contributions follow the CRediT taxonomy.
Dmytro Knopov: conceptualization, methodology, software, formal analysis, investigation, validation, data curation, visualization, writing (original draft).
Nadiya Shvai: conceptualization, supervision, writing (review and editing).
Both authors read and approved the final manuscript.

\bmhead{Use of AI-assisted tools}
During the preparation of this manuscript, the first author used large language models from Anthropic (Claude) and OpenAI (GPT) to assist with drafting, editing and reviewing the text, organising the related literature and developing and debugging the code for the simulations and the case studies.
The research ideas, methodology, analyses, results and conclusions reported here are the authors' own, and the authors take full responsibility for them.

\bibliography{refs}

\end{document}